\PassOptionsToPackage{numbers}{natbib}
\documentclass{article}
\usepackage[preprint]{neurips_2026}

\usepackage[utf8]{inputenc}
\usepackage[T1]{fontenc}
\usepackage{hyperref}
\usepackage{url}
\usepackage{booktabs}
\usepackage{amsfonts}
\usepackage{nicefrac}
\usepackage{microtype}
\usepackage[dvipsnames, table]{xcolor}
\usepackage{tikz,pgf}
\usepackage{stmaryrd}
\usetikzlibrary{positioning,fit,shapes,patterns,arrows,arrows.meta,shapes,automata,
  calc,decorations.pathreplacing,backgrounds}
\usepackage{pgfplots}
\pgfplotsset{compat=1.18}
\usepackage{amsmath, amssymb, amsthm, dsfont, mathtools, multirow, mathrsfs,
  relsize, bm}
\usepackage[ruled,vlined]{algorithm2e}
\usepackage{enumitem}
\usepackage{subcaption}
\usepackage{graphicx}
\usepackage{float}         
\usepackage{wrapfig}
\graphicspath{{outputs/figs/}}

\newcommand{\benchfig}[1]{%
  \IfFileExists{outputs/figs/#1}%
    {\includegraphics[width=\linewidth]{#1}}%
    {\fbox{\parbox{0.95\linewidth}{\centering\small
      \texttt{\detokenize{#1}}\\[2pt]\emph{(figure pending bench re-run)}}}}%
}

\newtheorem{theorem}{Theorem}
\newtheorem{corollary}[theorem]{Corollary}
\newtheorem{lemma}[theorem]{Lemma}

\newtheorem{definition}{Definition}
\newtheorem{assumption}{Assumption}
\newtheorem{remark}{Remark}
\theoremstyle{definition}
\newtheorem{example}{Example}

\newcommand{\KL}{\mathrm{KL}}

\newcommand{\doIntervention}{\mathbf{do}}

\tikzset{
  obs/.style    = {circle, draw, thick, minimum size=1cm, font=\small},
  latent/.style = {circle, draw, thick, dashed, minimum size=1cm, font=\small},
  exoT/.style   = {rectangle, draw, thick, minimum size=0.9cm, font=\small},
  intA/.style   = {circle, draw=orange!80!black, fill=red!15, thick,
                   minimum size=1cm, font=\small},
  intM/.style   = {rectangle, draw=blue!60!black, fill=blue!15, thick,
                   minimum size=0.9cm, font=\small},
}
\colorlet{colM}{blue!60!black}
\colorlet{colA}{orange!80!black}
\colorlet{colU}{gray!60}
\colorlet{colCausal}{green!50!black}

\title{Certified Policy Optimisation for Nested Causal Bandits via PAC-Bayes Risk}

\author{%
  Tim Woydt \\
  ProdAxon\\
  \texttt{tim@prodaxon.com} \\
  Department of Computer Science\\
  Technical University Darmstadt\\
  \texttt{tim.woydt@tu-darmstadt.de} \\
  \And
  Paul-David Zuercher \\
  ProdAxon\\
  \texttt{paul@prodaxon.com} \\
  Institute for Manufacturing \\
  Department of Engineering \\
  University of Cambridge \\
  \texttt{pdz20@eng.cam.ac.uk} \\
}

\begin{document}

\maketitle

\begin{abstract}
Critical sequential decisions are rarely single-timescale: a strategic decision causally shapes the context in which every subsequent tactical choice is made; standard bandit and reinforcement-learning theory does not capture this causal coupling between timescales. We formalise the problem class as \emph{Nested Contextual Causal Bandits} (NCCBs), a hierarchical SCM where each level's action sets the next level's context distribution, and propose \emph{Nested Causal Thompson Sampling} (NCTS), which draws one mechanism-factorised belief per episode and acts recursively under it. Our main theoretical result is a causal PAC-Bayesian excess-risk bound that certifies any candidate deployment policy from historic data alone, off-policy and anytime, answering the deployment question: \emph{can we trust this agent here, and at what risk?} Experiments on a hierarchical SCM show that, against a matched RFF-GP joint regression on the same function class, the factorised SCM-mechanism posterior transfers \emph{significantly} better zero-shot under exogenous distribution shifts, the recursive meta-to-inner commit \emph{significantly} dominates the joint-commit alternative in distribution, and the certificate \emph{significantly} contracts as offline data accumulates. Combining these results, we establish \emph{progressive certified handover}, a safe-deployment method: each timescale flips from a legacy controller to NCTS when gains can be certified, independently of the others.\footnote{Code: \url{https://anonymous.4open.science/r/AEGIS-3ED6/}.}
\end{abstract}

\section{Introduction}
\label{sec:intro}

\begin{figure}[t]
\centering
\captionsetup[sub]{skip=8pt}
\captionsetup{skip=8pt}
%
\begin{subfigure}[t]{0.28\textwidth}
\centering
\tikz\node[font=\smaller,inner xsep=5pt, inner ysep=2pt]
  {NCCB (Def.~\ref{def:nccb})};\par\smallskip
\begin{tikzpicture}[>=Stealth, scale=0.75, transform shape, every node/.append style={font=\scriptsize}]
  \node[exoT]   (T)  at (0, 4.8)   {$C^{(2)}$};
  \node[latent] (U)  at (2, 5.5)     {$U$};
  \node[intM]   (M)  at (0, 3.2)   {$A^{(2)}$};
  \node[obs, draw=colCausal] (C)  at (2, 4)   {$C^{(1)}$};
  \node[intA]   (A)  at (4, 4)     {$A^{(1)}$};
  \node[obs, draw=colCausal] (Y)  at (2.8, 2) {$Y$};
  \node[obs, draw=colCausal] (R)  at (1.2, 2) {$R$};
  \draw[->, colA]      (T) to[bend left=10] (A);
  \draw[->, colA]      (M) to[bend right=10] (A);
  \draw[->, colA]      (C) -- (A);
  \draw[->, colM]      (T) -- (M);
  \draw[->, colCausal] (T) -- (C);
  \draw[->, colCausal] (M) -- (C);
  \draw[->, colCausal] (A) -- (Y);
  \draw[->, colCausal] (C) -- (Y);
  \draw[->, colCausal] (C) -- (R);
  \draw[->, colCausal] (A) -- (R);
  \draw[->, colCausal] (Y) -- (R);
  \draw[->, colCausal] (M) -- (R);
  \draw[->, colCausal] (T) -- (R);
  \draw[->, dashed, colU] (U) -- (C);
  \draw[->, dashed, colU] (U) to[bend left=15]  (Y);
  \draw[->, dashed, colU] (U) to[bend right=15] (R);
\end{tikzpicture}
\caption{\textbf{Hierarchical SCM.} Meta-context $C^{(2)}$ drives meta-action $A^{(2)}$ ($\ell\!=\!2$); together they set the inner context $C^{(1)}$ for the inner action $A^{(1)}$, producing outcome $Y$ and reward $R$.}
\label{fig:banner-dag}
\end{subfigure}%
\hfill
%
\begin{subfigure}[t]{0.28\textwidth}
\centering
\tikz\node[font=\smaller,inner xsep=5pt, inner ysep=2pt]
  {NCTS (Alg.~\ref{alg:ncts})};\par\smallskip
\begin{tikzpicture}[>=Stealth, scale=0.78, transform shape]
  \node[draw, rounded corners=2pt, fill=gray!12, thick,
        minimum width=5.3cm, minimum height=3.6cm]
    (theta) at (0, 2) {};
  \node[font=\smaller, gray!50!black] at (0, 4)
    {draw $\theta^{(k)} \sim p_{k-1}$ per ep. $k = 1,\dots,K$};
  \node[draw=colM, fill=blue!4, rounded corners=3pt, thick,
        minimum width=5cm, minimum height=1.0cm]
    (metabox) at (0, 2.9) {};
  \node[exoT, scale=0.78] (T) at (-0.8, 2.9) {$C^{(2)}$};
  \node[intM, scale=0.78] (M) at ( 0.8, 2.9) {$A^{(2)}$};
  \draw[->, colM, thick] (T) -- (M);
  \node[colM, font=\smaller] at (-1.3, 3.6) {$I^{(2)}$ meta steps};
  \node[colM, font=\smaller] at (1.8, 3.55) {($l=2$)};
  \node[draw=colA, fill=orange!4, rounded corners=3pt, thick,
        minimum width=5cm, minimum height=1.0cm]
    (innerbox) at (0, 0.85) {};
  \node[obs, draw=colCausal, scale=0.78] (C) at (-1.5, 0.85) {$C^{(1)}$};
  \node[intA, scale=0.78]                (A) at ( 0.0,  0.85) {$A^{(1)}$};
  \node[obs, draw=colCausal, scale=0.78] (R) at ( 1.5, 0.85) {$Y,R$};
  \draw[->, colCausal] (C) -- (A);
  \draw[->, colCausal] (A) -- (R);
  \draw[->, colCausal] (C) to[bend right=24] (R);
  \node[colA, font=\smaller] at (-1.3, 1.55) {$I^{(2)} \times I^{(1)}$ steps};
  \node[colA, font=\smaller] at (1.8, 1.50) {($l=1$)};
  \draw[->, colCausal, thick]
    (metabox.south) -- (innerbox.north)
    node[midway, right, font=\tiny, colCausal!85!black] {\,$P(C^{(1)}\!\mid\! C^{(2)},A^{(2)})$};
  \node[font=\smaller, gray!50!black] at (0, 0)
    {observe $D_k$, update $p_{k-1}\!\to\!p_k$};
\end{tikzpicture}
\caption{\textbf{Recursive Thompson sampling.} One belief draw per episode is used at both levels: $I^{(2)}$ meta steps and $I^{(2)}\!\times\!I^{(1)}$ inner steps; after the episode, the posterior is updated.}
\label{fig:banner-ncts}
\end{subfigure}%
\hfill
%
\begin{subfigure}[t]{0.36\textwidth}
\centering
\tikz\node[font=\smaller,inner xsep=5pt, inner ysep=2pt]
  {AEGIS (Alg.~\ref{alg:ra-ncts})};\par\smallskip
\includegraphics[width=.95\linewidth]{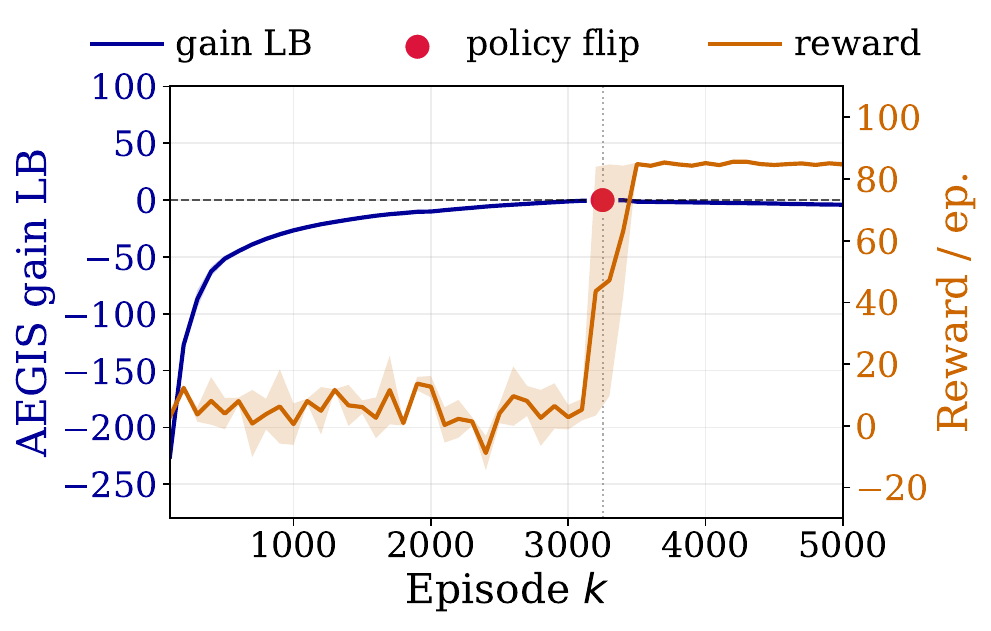}
\caption{\textbf{AEGIS handover gate} (\textcolor{red}{$\bullet$}). The certified-gain LB (blue) crosses $0$ at $\overline K_\mathrm{flip}\!\approx\!3{,}250$; control transfers from legacy $\mu$ to $\pi_Q$ and empirical reward (orange) is \emph{significantly} higher post-flip than pre-flip (per-seed paired $t$-test, App.~\ref{app:exp-aegis-flip}).}
\label{fig:banner-bench}
\end{subfigure}
\caption{\textbf{The NCCB framework at a glance} ($L\!=\!2$). \subref{fig:banner-dag} hierarchical SCM problem class; \subref{fig:banner-ncts} NCTS (Algorithm~\ref{alg:ncts}) draws one belief per episode and acts recursively across levels; \subref{fig:banner-bench} AEGIS deploys NCTS only after the per-level PRISM certificate (Theorem~\ref{thm:pac-bayes-causal}) flips, realising the \emph{anytime-certified handover} the framework targets. Visual encoding in (a, b): \textcolor{colM}{blue}/\textcolor{colA}{orange}, policy decisions; \textcolor{colCausal}{green}, learnable mechanisms; \textcolor{colU}{grey dashed}, latent confounder.}
\label{fig:banner}
\end{figure}

Critical sequential decision-making, treatment titration in
healthcare, input allocation in agronomy, batch configuration in
process engineering, is still entrusted to humans because the
learned-policy guarantees on offer answer the wrong question.
Frequentist regret bounds hold worst-case over all environments;
Bayesian regret bounds hold on average over a prior; neither speaks
to \emph{this} realised environment, with \emph{this} confounding,
under \emph{this} legacy controller. These domains are also naturally
hierarchical: a strategic decision generates the context for a
tactical one, which generates the context for an operational one.
Progressive transfer of control must therefore proceed level by
level, gated at each level by an empirical bound meeting a
practitioner-specified risk threshold. What is needed is a
$(1{-}\delta)$-bound that holds for the realised environment,
simultaneously over candidate policies, across both the offline
warm-start phase and the online deployment phase, and gates handover
timescale by timescale; no existing framework provides guarantees
that are simultaneously timescale-resolved, off-policy valid, and
context-specific.

We introduce the \emph{Nested Contextual Causal Bandit} (NCCB), an
$L$-level hierarchical SCM in which each level's decision causally
generates the context for the level below; the causal graph is given,
only the mechanism parameters are unknown. For this class we propose
\emph{Nested Causal Thompson Sampling} (NCTS), which draws one belief
per episode and acts on it at every nested level, and \emph{AEGIS}
(Anytime Episodic Gain-Certified Inference Schedule), a host-agnostic
deployment-time wrapper that runs progressive per-level handover off
any host with a Bayesian posterior and a reward predictor. The
analysis covers both algorithms: a single causal PAC-Bayesian
excess-risk bound (Theorem~\ref{thm:pac-bayes-causal}) certifies the
realised environment, off-policy and anytime, with a regulariser that
decomposes along the causal mechanisms.
Section~\ref{sec:related} positions NCCB / NCTS against the closest
causal-bandit, hierarchical-bandit, and PAC-Bayesian off-policy lines.

\textbf{Contributions.} \emph{Problem class and algorithm.} We formalise the NCCB problem class (Section~\ref{sec:problem}) and propose NCTS (Section~\ref{sec:algorithm}), a recursive Thompson rule that draws one belief per episode and acts level by level under it. \emph{Theoretical guarantee.} PRISM (Theorem~\ref{thm:pac-bayes-causal}) is a causal PAC-Bayes excess-risk bound that, for any intervention scope $\mathcal{I} \subseteq [L]$, factorises both the KL regulariser (over learnable mechanisms, per-mechanism dimension) and the importance weight (over controlled levels). With logarithmic smoothing \citep{haddouche2025logarithmic} the rate is $O(1/\sqrt K)$ in episode count, off-policy and anytime via Ville's inequality, with informative offline-data priors. AEGIS (Algorithm~\ref{alg:ra-ncts}) deploys PRISM as a host-agnostic recipe that picks the certificate-maximising scope and hands over level by level. \emph{Empirical evidence.} A hierarchical-SCM benchmark (Section~\ref{sec:experiments}) corroborates all three claims at $p < 0.05$ or better with the function class held fixed (RFF-GP).



\section{Related Work}
\label{sec:related}

\textbf{Causal MDPs and Causal Bandits.} \citet{lu2022efficient} introduced C-MDPs and C-UCBVI, proving regret $\widetilde{O}(HS\sqrt{ZK})$ with known graph, where $Z$ compresses the action space via key variables; this compression requires $A$ to affect $R$ only via intermediate variables, so it is vacuous in our NCCB where $A$ directly parents $R$ (Definition~\ref{def:nccb}). \citet{mutti2023exploiting} extended this to unknown graphs via posterior sampling (C-PSRL) on factored MDPs, not directly comparable to our nested-bandit setting. \citet{lee2018structural,lee2020characterizing} introduced structural causal bandits with see/do separation and characterised optimal interventions; we extend these primitives hierarchically and add PAC-Bayes off-policy guarantees absent from their frequentist analysis.

\textbf{Posterior-sampling RL and hierarchical bandits.} PSRL \citep{osband2013more,osband2014fpsrl,osband2017why} addresses posterior sampling for tabular MDPs; NCTS is not directly comparable, operating over continuous-parameter SCM posteriors with two-scope (meta/inner) actions rather than tabular factorisations. Hierarchical-bandit work \citep{hong2022hierarchical, hong2022deep, guan2024improved} does not include causal mechanism structure.

\textbf{PAC-Bayesian learning.} \citet{seldin2012pac} established PAC-Bayes bounds for martingales; \citet{haddouche2022online} developed the online PAC-Bayes framework. \citet{haddouche2025logarithmic} proposed logarithmic smoothing for adaptive off-policy PAC-Bayes, which we adapt to between-episode aggregation (Appendix~\ref{app:proof-pacbayes}, Remark~\ref{rem:ls-adapt}). \citet{aouali2024unified} provide a PAC-Bayesian pessimism analysis for offline contextual bandits with regularised IS; closest in technique to Theorem~\ref{thm:pac-bayes-causal}, but single-decision and without causal mechanism structure or decision timescales.

\textbf{Safe policy improvement and OPE.} \citet{thomas2015hcope} introduced high-confidence off-policy evaluation; \citet{laroche2019spibb} formalised safe policy improvement (SPI) as deploying a new policy only where its bootstrap-confidence value exceeds the baseline by a margin, the closest line in spirit to AEGIS's per-level handover gate. SPI is single-policy and value-estimate-based; AEGIS is per-level, anytime-valid via Ville's inequality, and mechanism-factorised through Theorem~\ref{thm:pac-bayes-causal}, so the certificate compounds along the SCM hierarchy rather than collapsing into one bootstrap quantile. Adapting SPI / Aouali-style bounds to NCTS is non-trivial: they assume a single flat policy and an unstructured logged dataset, whereas NCCB requires a per-level certificate over an SCM-factorised policy with mechanism-wise KL decomposition. We leave a tailored SPI-for-NCTS comparison to future work; \citet{aouali2024unified} is the closest in technique to Theorem~\ref{thm:pac-bayes-causal} (PAC-Bayes pessimism on a regularised IS estimator).

\section{Preliminaries}
\label{sec:prelim}

\textbf{Causal models and interventions.}
A structural causal model (SCM) over endogenous variables $\mathbf{X}$ specifies, for each $X \in \mathbf{X}$, a deterministic equation $X = f_X(\mathrm{Pa}(X), \epsilon_X; \theta_X^*)$ with parent set $\mathrm{Pa}(X) \subseteq \mathbf{X}$ and exogenous noise $\epsilon_X$; the parent sets induce a DAG $\mathcal{G}$, treated as known. Pearl's hard intervention $\doIntervention(X = x)$ replaces $f_X$ by $X := x$; a \emph{soft intervention} $\doIntervention(X := h)$ generalises this by replacing $f_X$ with a new mechanism $h$ whose parents are non-descendants of $X$ in $\mathcal{G}$. A \emph{policy intervention} on an action $A$ is the soft intervention $\doIntervention(A := \pi(\cdot \mid X))$ for an observable context $X \subseteq \mathrm{Pa}'(A)$ excluding descendants of $A$ \citep{pearl2000causality}.

\textbf{Contextual bandits.}
A contextual bandit is an SCM over context $C$, action $A$, reward $R$ with $R = f_R(C, A, \epsilon_R; \theta_R^*)$. A policy $\pi: \mathcal{X} \to \Delta(\mathcal{A})$ induces a policy intervention $\doIntervention(A := \pi(\cdot \mid C))$, with value $V(\pi) := \mathbb{E}_{C \sim P_C,\, A \sim \pi(\cdot \mid C)}[R]$. In the off-policy setting, data are collected under a behaviour policy $\mu \neq \pi$, and $V(\pi)$ is inferred via importance weighting $w_t = \pi(a_t \mid c_t)/\mu(a_t \mid c_t)$ or related estimators.

\textbf{Thompson sampling.}
Thompson sampling maintains a posterior $p_k(\theta)$ over the unknown parameters: at episode $k$, draw one \emph{belief} $\theta^{(k)} \sim p_{k-1}$, act greedily under it, observe $\mathcal{D}_k$, and Bayes-update to $p_k$. Drawing (rather than maximising) provides exploration with well-understood Bayes-regret guarantees \citep{russo2014learning, osband2013more}.

\textbf{PAC-Bayes for dependent data.}
Our off-policy analysis builds on the online PAC-Bayes framework of \citet{haddouche2022online}: for losses adapted to a filtration $(\mathcal{F}_t)_{t \geq 0}$ and an \emph{online predictive sequence} of priors $(P_t)$ (each $P_t$ being $\mathcal{F}_{t-1}$-measurable, $P_1$ data-free), a change-of-measure argument with an exponential supermartingale and Ville's inequality yields concentration that holds anytime over $T$ and uniformly over posterior sequences $(Q_t)$, generalising the i.i.d.\ machinery of \citet{seldin2012pac}. Theorem~\ref{thm:pac-bayes-causal} uses a sharper specialisation \citep[Thm.~3.2]{haddouche2025logarithmic} with logarithmic rather than quadratic dependence on the temperature $\lambda$; see Appendix~\ref{app:proof-pacbayes}.

\textbf{Notation.}
A complete summary of symbols used throughout the paper is provided in Appendix~\ref{app:notation}.

\section{Nested Contextual Causal Bandits}
\label{sec:problem}
 
We formalise the decision hierarchy informally described in Section~\ref{sec:intro}: a slow outer decision causally shapes the context of every faster inner decision. The SCM template (Definition~\ref{def:nccb}) together with the three assumptions stated below fix the problem class on which the certificate of Theorem~\ref{thm:pac-bayes-causal} operates.
 
\begin{definition}[Nested Contextual Causal Bandit (NCCB)]
\label{def:nccb}
An $L$-level NCCB is a hierarchical SCM with endogenous variable set $\mathbf{X} = \{C^{(\ell)},\, A^{(\ell)}\}_{\ell=1}^{L} \cup \{Y,\, R\}$. At level $\ell$, the agent observes the context $C^{(\ell)}$ and chooses the action $A^{(\ell)}$ via a stochastic policy $\pi^{(\ell)}$, which we treat as a \emph{soft intervention} on the SCM that replaces the natural (or legacy-logged) mechanism for $A^{(\ell)}$. $Y$ is the \emph{outcome} of the innermost decision and $R$ the reward. Levels are indexed inside-out: $\ell = 1$ is the fastest (innermost), $\ell = L$ the slowest (outermost). The level-$\ell$ decision causally generates the context $C^{(\ell-1)}$ at the next-faster level via the SCM. The outermost context $C^{(L)}$ is exogenous (no parents in $\mathcal{G}$); the action variables $A^{(\ell)}$ are policy-set; every remaining $X \in \mathbf{X}$ is generated by a learnable mechanism with parameters $\theta_X^*$, giving the canonical learnable-mechanism set $\Phi = \{C^{(\ell)} : \ell < L\} \cup \{Y, R\}$. At each episode $k$, level $\ell$ runs $I_\ell$ decisions: $I_L$ at the outermost level, $I_\ell$ per parent-level decision for $\ell < L$; the innermost level generates the per-step rewards. Total samples per episode: $N_L := \prod_{\ell=1}^L I_\ell$. The reward has one context plus one action per level as parents, plus the outcome $Y$, so $|\mathrm{Pa}_\mathcal{G}(R)| = 2L + 1$ in the scalar case.
\end{definition}
 
\begin{definition}[Intervention scope]
\label{def:intervention-scope}
An \emph{intervention scope} $\mathcal{I} \subseteq \{1, \ldots, L\}$ specifies the levels at which the agent acts under a new policy. Let $\ell^\star_{\mathcal{I}} := \max(\mathcal{I})$ be the outermost new-policy level; only the contexts below $\ell^\star_{\mathcal{I}}$ are relevant for estimating the effect of actions, namely $\Phi_{\mathcal{I}} = \{C^{(\ell)} : \ell < \ell^\star_{\mathcal{I}}\} \cup \{Y, R\}$. The extremes $\ell^\star_{\mathcal{I}} = 1$ and $\ell^\star_{\mathcal{I}} = L$ give the minimal scope $\Phi_{\mathcal{I}} = \{Y, R\}$ and the full scope $\Phi_{\mathcal{I}} = \Phi$.
\end{definition}
 
\begin{assumption}[Causal Markov + exogenous noise]
\label{ass:markov}
The SCM satisfies the Causal Markov condition w.r.t.\ $\mathbf{X} \cup \mathbf{U}$, where $\mathbf{U} = \{U^{(\ell)}\}_{\ell=1}^L$ is a family of exogenous within-level latent variables. Each $U^{(\ell)}$ is i.i.d.\ across episodes and within-episode samples, mutually independent across $\ell$, and independent of the base noise terms $\varepsilon_X$ of the mechanisms in $\Phi$ (which combine with $U^{(\ell)}$ via additive loadings to form the residual noise of $f_X$). Soft interventions on $A^{(\ell)}$ replace its natural mechanism without altering any other mechanism in $\Phi$.
\end{assumption}
 
\begin{assumption}[Per-level i.i.d.\ contexts]
\label{ass:iid}
For each $\ell < L$, the level-$\ell$ contexts $C^{(\ell)}$ generated within one parent-level decision are i.i.d.\ given that decision and the relevant mechanism parameters, and independent of past actions taken inside the same parent-level decision. The outermost contexts $C^{(L)}_k$ are i.i.d.\ across episodes.
\end{assumption}
 
\begin{assumption}[Overlap + backdoor admissibility]
\label{ass:backdoor}
For each level $\ell$, any new policy $\pi^{(\ell)}$ is absolutely continuous w.r.t.\ the legacy policy $\mu^{(\ell)}$: $\pi^{(\ell)}(a \mid \cdot) > 0 \Rightarrow \mu^{(\ell)}(a \mid \cdot) > 0$. The policy context $\mathrm{Pa}_\mathcal{G}(A^{(\ell)})$ observable at level-$\ell$ decision time satisfies the backdoor criterion \citep{pearl2000causality} w.r.t.\ $(A^{(\ell)}, R)$ in $\mathcal{G}$, so the value of any new $\pi$ is identifiable from logged data under $\mu = (\mu^{(1)}, \ldots, \mu^{(L)})$.
\end{assumption}
 
\begin{remark}[Exogenous confounding]
\label{rem:confounding}
The framework allows for exogenous within-level confounding at the innermost level: a latent factor $U^{(1)}$ shared across the mechanisms $\{C^{(1)}, Y, R\}$, e.g.\ an unmodelled per-step disturbance jointly affecting context, outcome and reward, is absorbed into the residual noise structure under Assumption~\ref{ass:markov}. The backdoor criterion of Assumption~\ref{ass:backdoor} ensures that $V(\pi;\theta^*)$ remains identifiable. Outer levels carry at most one learnable mechanism each, so within-level sharing is moot there. Cross-level confounding (a single latent acting on variables at multiple levels) lies outside this model class; handling it would require non-backdoor identification (e.g.\ instrumental or proximal variables), which we leave to future work.
\end{remark}
 
\section{Nested Causal Thompson Sampling}
\label{sec:algorithm}
\begin{algorithm}[t]
\caption{NCTS}
\label{alg:ncts}
\KwIn{$\mathcal{G}$ (known); prior $p_0$; scope $\mathcal{I}$;
      legacy $(\mu^{(\ell)})_{\ell \notin \mathcal{I}}$;
      $K$; $(I_\ell)_{\ell=1}^L$.}
\KwOut{Posteriors $(p_k)$; within-episode policies $\{\pi^{(\ell)}_k\}$.}
\SetKwProg{Fn}{Procedure}{:}{end}
\SetKwFunction{Traverse}{Traverse}
\Fn{\Traverse{$\ell$, $\tilde c^{(\ell+1)}$}}{
  \For{$i_\ell = 1, \ldots, I_\ell$}{
    Sample $C^{(\ell)} \!\sim\! P_{C^{(\ell)}}(\cdot\!\mid\!\tilde c^{(\ell+1)};\theta^*)$;\ extend $\tilde c^{(\ell)} \!\leftarrow\! (\tilde c^{(\ell+1)}, C^{(\ell)})$\;
    \lIf{$\ell \in \mathcal{I}$}{$a^{(\ell)} \!\leftarrow\! \pi^{(\ell)}_k(\tilde c^{(\ell)})$ via \eqref{eq:pi-ell} under $(\mathcal{G}, \theta^{(k)})$}\lElse{$a^{(\ell)} \!\sim\! \mu^{(\ell)}(\cdot\!\mid\!\tilde c^{(\ell)})$}
    $\tilde c^{(\ell)} \!\leftarrow\! (\tilde c^{(\ell)}, a^{(\ell)})$;\
    \lIf{$\ell > 1$}{\Traverse{$\ell - 1$, $\tilde c^{(\ell)}$}}\lElse{observe $Y, R$; append $(\tilde c^{(\ell)}, Y, R)$ to $\mathcal{D}_k$}
  }
}
\For{$k = 1, \ldots, K$}{
  Draw $\theta^{(k)} \!\sim\! p_{k-1}$ (held fixed for episode $k$); \Traverse{$L$, $()$}; update $p_k$ from $\mathcal{D}_k$ (per-mechanism: NIG / online Gibbs).
}
\end{algorithm}

NCTS is the recursive Thompson-sampling rule on the $L$-level NCCB, parametrised by an intervention scope $\mathcal{I} \subseteq \{1, \ldots, L\}$ identifying the levels at which the agent plans (Definition~\ref{def:intervention-scope}). Algorithm~\ref{alg:ncts} summarises one episode; §\ref{sec:within} gives the formal scope-aware policy and recursive value function, §\ref{sec:between} the per-mechanism Bayesian update. The $L = 2$ specialisation matching the experiments (innermost-level closed form, outer-level Thompson rule, D-optimal alternative) is in Appendix~\ref{app:instantiation-example}.

\subsection{Within-Episode Bandit}
\label{sec:within}

\textbf{Recursive value function.}
Fix an intervention scope $\mathcal{I} \subseteq \{1, \ldots, L\}$. At the start of episode $k$, draw one belief $\theta^{(k)} \sim p_{k-1}(\theta)$ and hold it fixed throughout the episode. The agent traverses the $L$ levels from $\ell = L$ down to $\ell = 1$; at each level $\ell$, after observing the level-$\ell$ context $C^{(\ell)}$, the level-$\ell$ action $a^{(\ell)} \in \mathcal{A}^{(\ell)}$ is set by the scope-aware policy
\begin{equation}
  \pi^{(\ell)}_k\!\bigl(\tilde c^{(\ell)}\bigr)
  = \begin{cases}
    \arg\max_{a \in \mathcal{A}^{(\ell)}}\, \hat R^{(\ell)}_{\mathcal{I}}\!\bigl(a,\, \tilde c^{(\ell)};\, \theta^{(k)}\bigr), & \ell \in \mathcal{I}, \\[2pt]
    a \sim \mu^{(\ell)}\!\bigl(\,\cdot\, \mid \tilde c^{(\ell)}\bigr), & \ell \notin \mathcal{I},
  \end{cases}
  \label{eq:pi-ell}
\end{equation}
where the level-$\ell$ \emph{augmented context} $\tilde c^{(\ell)} := (C^{(L)}, a^{(L)}, \ldots, a^{(\ell+1)}, C^{(\ell)})$ collects every higher-level context and action together with the current level-$\ell$ context, $\tilde c^{(\ell-1)} := (\tilde c^{(\ell)}, a, C^{(\ell-1)})$ extends $\tilde c^{(\ell)}$ with the level-$\ell$ choice $a$ and the freshly drawn level-$(\ell-1)$ context, and $\hat R^{(\ell)}_{\mathcal{I}}(a, \tilde c^{(\ell)}; \theta)$ is the expected reward over all descendant levels of $\ell$ under belief $\theta$ when controlled levels follow $\pi^{(\ell')}_k$ and uncontrolled levels follow the legacy $\mu^{(\ell')}$:
\begin{equation}
  \hat R^{(\ell)}_{\mathcal{I}}\!\bigl(a,\, \tilde c^{(\ell)};\, \theta\bigr)
  = \begin{cases}
    \mathbb{E}\!\bigl[R \,|\, A^{(1)} {=} a,\, \tilde c^{(1)};\, \theta\bigr], & \ell = 1, \\[6pt]
    \displaystyle \mathbb{E}\!\bigl[\, \max_{a' \in \mathcal{A}^{(\ell-1)}} \hat R^{(\ell-1)}_{\mathcal{I}}\!\bigl(a',\, \tilde c^{(\ell-1)};\, \theta\bigr) \,\big|\, A^{(\ell)} {=} a,\, \tilde c^{(\ell)};\, \theta\bigr], & \ell{-}1 \in \mathcal{I}, \\[6pt]
    \displaystyle \mathbb{E}\!\bigl[\, \mathbb{E}_{a' \sim \mu^{(\ell-1)}}\!\bigl[\hat R^{(\ell-1)}_{\mathcal{I}}\!\bigl(a',\, \tilde c^{(\ell-1)};\, \theta\bigr)\bigr] \,\big|\, A^{(\ell)} {=} a,\, \tilde c^{(\ell)};\, \theta\bigr], & \ell{-}1 \notin \mathcal{I}.
  \end{cases}
  \label{eq:R-recursion}
\end{equation}
Equation~\eqref{eq:R-recursion} computes the level-$\ell$ expected reward recursively over the hierarchy: by Assumption~\ref{ass:iid}, no state transitions occur within a level, so the level-$\ell$ value is the expected level-$(\ell{-}1)$ optimum (or legacy mean) over the freshly drawn descendant context, evaluated in closed form for linear Gaussian ANMs and by Monte Carlo otherwise (Appendix~\ref{app:instantiation-example}).

\subsection{Between-Episode Bayesian Update}
\label{sec:between}

The learnable mechanisms are $\Phi = \{C^{(\ell)} : \ell < L\} \cup \{Y, R\}$ (Definition~\ref{def:nccb}). We adopt a \emph{mechanism-factorised working model} for prior and likelihood: $p_0(\theta) = \prod_{X \in \Phi} p_0(\theta_X)$ and $q(\mathcal{D}_k \,|\, \theta, \mathcal{G}) = \prod_{X \in \Phi} p\bigl(\mathcal{D}_k^{(X)} \,|\, \theta_X, \mathrm{Pa}(X)\bigr)$, yielding the per-mechanism update $p_k(\theta_X) \propto p\bigl(\mathcal{D}_k^{(X)} \,|\, \theta_X, \mathrm{Pa}(X)\bigr) \cdot p_{k-1}(\theta_X)$, with $\mathcal D_k = \bigl\{\bigl(C^{(\ell)}_{k,\mathbf{i}}, a^{(\ell)}_{k,\mathbf{i}}\bigr)_{\ell=1}^L,\, Y_{k,\mathbf{i}},\, R_{k,\mathbf{i}}\bigr\}$ indexed by the within-episode multi-index $\mathbf{i} = (i_1, \ldots, i_L)$. For linear ANMs with Gaussian noise, each factor is a closed-form NIG-conjugate update, no MCMC required. For non-linear ANMs, we use the online Gibbs posterior of \citet{haddouche2022online}, which is PAC-Bayes optimal and absorbs any approximation error into the KL regulariser of Theorem~\ref{thm:pac-bayes-causal} without an additive penalty.

\begin{remark}[Working model vs.\ true likelihood]
\label{rem:working-model}
Under no shared latent confounding (i.e.\ at most one $X \in \Phi$ has $\delta_{UX} \neq 0$ in Example~\ref{ex:l2-instantiation}), the working model $q(\mathcal{D}_k \mid \theta, \mathcal{G})$ coincides with the true likelihood by the Causal Markov Condition \citep[Ch.~1.2]{pearl2000causality}, and $p_k$ is the exact Bayesian posterior. Under shared $U^{(1)}$ across $\{C^{(1)}, Y, R\}$ (Remark~\ref{rem:confounding}), the marginal likelihood does not factorise mechanism-wise, so $p_k$ is misspecified relative to it. PAC-Bayes absorbs this misspecification: Theorem~\ref{thm:pac-bayes-causal} certifies $V(\pi^{\mathcal{I}}_Q; \theta^*)$ via the IS estimator $\widehat V^\mathrm{LS}_\mathrm{pes}$, which depends on the data only through the level-wise importance weights and the realised reward, not on the working likelihood. Identifiability of $V$ from logged data is supplied by Assumption~\ref{ass:backdoor}, which holds regardless of the working-model factorisation; what the misspecification can do is widen the certificate (Effect~1, Appendix~\ref{app:offline-effect}), not invalidate it.
\end{remark}

\begin{remark}[Known vs.\ unknown]
\label{rem:known-unknown}
The agent is given the functional forms $f_X$ of the learnable mechanisms and an \emph{I-map} $\mathcal{G}$ of the data-generating process: a DAG over $\mathbf{X}$ that contains every true causal edge, possibly with additional spurious ones. This is strictly weaker than knowing the minimal causal graph; every d-separation read off $\mathcal{G}$ holds in the true distribution, so the backdoor criterion (Assumption~\ref{ass:backdoor}) and the mechanism factorisation (Definition~\ref{def:nccb}) remain sound while the practitioner is freed from resolving internal structure of multi-dimensional variables. The agent is also given the legacy logging policies $\mu = (\mu^{(\ell)})_{\ell=1}^L$ that produced the offline data and that act at every level $\ell \notin \mathcal{I}$ during deployment; outside $\mathcal{I}$, $\mu^{(\ell)}$ is treated as a known component of the environment when planning. The only unknown is $\theta^* = (\theta_X^*)_{X \in \Phi}$, the parameters of the learnable mechanisms.
\end{remark}

\section{Causal PAC-Bayesian Bound and Per-Level Certification}
\label{sec:results}

The causal factorisation of the posterior (Section~\ref{sec:between}) and the level-factored intervention scope $\mathcal{I} \subseteq \{1, \ldots, L\}$ (Definition~\ref{def:intervention-scope}) together enable a PAC-Bayesian excess-risk bound for arbitrary on-/off-policy mixtures across levels. The bound features two parallel factorisations: the KL regulariser sums over the post-intervention learnable mechanisms $\Phi_{\mathcal{I}}$, and the importance weight is a product across the controlled levels $\ell \in \mathcal{I}$. It accommodates linear and non-linear ANM mechanisms and does not require the reward to decompose additively. We use $Q$ to denote a generic factorised distribution over $\theta$ (the certified ``posterior'' in the PAC-Bayes sense); the bound holds simultaneously over all such $Q$, with the running Bayesian posterior $p_k$ from Section~\ref{sec:between} as the natural choice.

\begin{assumption}[Prior consistency]
\label{ass:prior}
$p_0(\theta^*) > 0$ on the support of $\theta^*$.
\end{assumption}

\begin{assumption}[Bounded rewards and per-level bounded importance weights]
\label{ass:bounded-weights}
\emph{(i) Rewards.} The reward is bounded: $R_t \in [0, R_{\max}]$ almost surely, with $R_{\max} < \infty$ known to the practitioner. Negative or unbounded raw rewards are admitted via an affine reparametrisation $\tilde R_t = (R_t - R_\mathrm{lo})/(R_\mathrm{hi} - R_\mathrm{lo})$ to $[0, 1]$ before plugging into Eq.~\eqref{eq:ls-estimator}. \emph{(ii) Importance weights.} For each level $\ell \in \mathcal{I}$, there exists $B^{(\ell)}_w < \infty$ such that $\sup_{a^{(\ell)},\, \tilde c^{(\ell)}} \pi^{(\ell)}_\theta(a^{(\ell)} \mid \tilde c^{(\ell)}) / \mu^{(\ell)}(a^{(\ell)} \mid \tilde c^{(\ell)}) \leq B^{(\ell)}_w$. \emph{(iii) Loss range.} Combining (i) and (ii), the per-step loss $\mathcal{L}^{\mathcal{I}}_t(\pi^{\mathcal{I}}_\theta) = w^{\mathcal{I}}_t(\theta) \cdot R_t$ satisfies $\mathcal{L}^{\mathcal{I}}_t \in [0, B^{\mathcal{I}}]$ with $B^{\mathcal{I}} := R_{\max} \cdot \sup_{\theta, t} w^{\mathcal{I}}_t(\theta) \leq R_{\max} \prod_{\ell \in \mathcal{I}} B^{(\ell)}_w$. In the on-policy case ($\pi^{(\ell)}_\theta = \mu^{(\ell)}$ for all $\ell \in \mathcal{I}$), $B^{(\ell)}_w = 1$ and $B^{\mathcal{I}} = R_{\max}$.
\end{assumption}

\begin{theorem}[PRISM: Per-mechanism Risk Inequality with Scope Mixing]
\label{thm:pac-bayes-causal}
Fix any intervention scope $\mathcal{I} \subseteq \{1, \ldots, L\}$. The hybrid policy $\pi^{\mathcal{I}}_\theta$ has $\pi^{(\ell)}_\theta = \mu^{(\ell)}$ at $\ell \notin \mathcal{I}$ (Eq.~\eqref{eq:pi-ell}); let $\pi^{\mathcal{I}}_Q$ denote the randomised version that draws $\theta \sim Q$ once per episode, so that $V(\pi^{\mathcal{I}}_Q; \theta^*) = \mathbb{E}_{\theta \sim Q}[V(\pi^{\mathcal{I}}_\theta; \theta^*)]$. Index the certification dataset $\mathcal{D}_\mathrm{cert}$ by episode $k = 1, \ldots, K$ and within-episode multi-index $\mathbf{i} \in [N_L]$. The level-factored importance weight is the product over all $L$ levels of (deployment policy)/(behaviour policy); when $\mathcal{D}_\mathrm{cert}$ was collected under the legacy $\mu = (\mu^{(\ell)})_{\ell=1}^L$, the legacy/legacy factors at $\ell \notin \mathcal{I}$ cancel, leaving
\begin{equation}
  w^{\mathcal{I}}_t(\theta)
  := \prod_{\ell=1}^L \frac{\pi^{(\ell)}_\theta\bigl(a^{(\ell)}_t \mid \tilde c^{(\ell)}_t\bigr)}{\mu^{(\ell)}\bigl(a^{(\ell)}_t \mid \tilde c^{(\ell)}_t\bigr)}
  \;=\; \prod_{\ell \in \mathcal{I}} \frac{\pi^{(\ell)}_\theta\bigl(a^{(\ell)}_t \mid \tilde c^{(\ell)}_t\bigr)}{\mu^{(\ell)}\bigl(a^{(\ell)}_t \mid \tilde c^{(\ell)}_t\bigr)}.
  \label{eq:hybrid-weight}
\end{equation}
Define the per-sample hybrid loss $\mathcal{L}^{\mathcal{I}}_{k, \mathbf i}(\pi^{\mathcal{I}}_\theta) := w^{\mathcal{I}}_{k, \mathbf i}(\theta) \cdot R_{k, \mathbf i}$ and the \emph{episode-mean loss} $\bar{\mathcal{L}}^{\mathcal{I}}_k(\pi^{\mathcal{I}}_\theta) := \frac{1}{N_L}\sum_{\mathbf i \in [N_L]}\mathcal{L}^{\mathcal{I}}_{k, \mathbf i}(\pi^{\mathcal{I}}_\theta) \in [0, B^{\mathcal{I}}]$. Under Assumptions~\ref{ass:markov}--\ref{ass:bounded-weights}, for any data-free prior $p_0 = \prod_{X \in \Phi} p_{0,X}$ and any $\lambda \in (0, 1/B^{\mathcal{I}})$, in the off-policy phase (legacy $\mu$ fixed across episodes), with probability at least $1 - \delta$ simultaneously over all factorised $Q = \prod_{X \in \Phi_{\mathcal{I}}} Q_X$ and all $K \geq 1$:
\begin{equation}
  V(\pi^{\mathcal{I}}_Q;\,\theta^*)
  \geq \widehat{V}^{\mathrm{LS}}_{\mathrm{pes}}(\pi^{\mathcal{I}}_Q)
  - \frac{\sum_{X \in \Phi_{\mathcal{I}}}
    \KL(Q_X\|p_{0,X}) + \log(1/\delta)}{\lambda K},
  \label{eq:pac-bayes-causal}
\end{equation}
where the pessimistic Logarithmic Smoothing value estimator is
\begin{equation}
  \widehat{V}^{\mathrm{LS}}_{\mathrm{pes}}(\pi^{\mathcal{I}}_Q)
  := B^{\mathcal{I}}
  + \frac{1}{\lambda K}\sum_{k=1}^K
    \mathbb{E}_{\theta \sim Q}\!\Bigl[\log\!\Bigl(1 - \lambda\,\bigl(B^{\mathcal{I}} - \bar{\mathcal{L}}^{\mathcal{I}}_k(\pi^{\mathcal{I}}_\theta)\bigr)\Bigr)\Bigr].
  \label{eq:ls-estimator}
\end{equation}
The certified value is the expected reward of $\pi^{\mathcal{I}}_Q$ under the true environment $\theta^*$; the KL regulariser sums over $\Phi_{\mathcal{I}}$ only, mirroring the cancellation of the IS weight at $\ell \notin \mathcal{I}$. For ANMs whose mechanism posteriors satisfy $\KL(Q_X \| p_{0,X}) = O(d_X (\sum_{\beta \in \mathrm{Pa}(X)} d_\beta + 1) \log K)$ (a regularity condition met by Gaussian-conjugate updates and by sufficiently smooth Gibbs posteriors with bounded score \citep{haddouche2025logarithmic}), the total regulariser scales with the per-mechanism parameter dimension rather than the full joint space. Setting $\lambda = 1/\sqrt{K}$ yields an excess-risk rate of $O(1/\sqrt{K})$ with a causally compressed regulariser; see Appendix~\ref{app:proof-pacbayes} for the full proof.
\end{theorem}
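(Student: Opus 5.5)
The plan is to reduce Theorem~\ref{thm:pac-bayes-causal} to the logarithmic-smoothing PAC-Bayes inequality of \citet[Thm.~3.2]{haddouche2025logarithmic}, applied at the \emph{episode} level, and then verify the two causal factorisations separately.

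\textbf{Step 1: episode-level martingale structure.} Take the filtration $\mathcal{F}_k$ generated by the first $k$ episodes of $\mathcal{D}_\mathrm{cert}$. In the off-policy phase $\mu$ is fixed, so by Assumptions~\ref{ass:markov} and~\ref{ass:iid} the episode-mean losses $\bar{\mathcal{L}}^{\mathcal{I}}_k(\pi^{\mathcal{I}}_\theta)$ are i.i.d.\ across $k$ for each fixed $\theta$; in particular they are $\mathcal{F}_k$-adapted with conditional mean independent of $\mathcal{F}_{k-1}$. Aggregating the $N_L$ within-episode samples into one episode mean sidesteps the within-episode dependence induced by shared higher-level contexts, and is why the rate is in $K$, not $KN_L$.

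\textbf{Step 2: unbiasedness of the level-factored IS weight.} For fixed $\theta$, I show $\mathbb{E}[\bar{\mathcal{L}}^{\mathcal{I}}_k(\pi^{\mathcal{I}}_\theta)] = V(\pi^{\mathcal{I}}_\theta;\theta^*)$. The logged joint density factorises along $\mathcal{G}$ (Assumption~\ref{ass:markov}) into the mechanisms in $\Phi$ times $\prod_\ell \mu^{(\ell)}$. Replacing $\mu^{(\ell)}$ by $\pi^{(\ell)}_\theta$ yields the post-intervention density, since soft interventions leave the other mechanisms unchanged. The ratio of the two densities is exactly the product in \eqref{eq:hybrid-weight}, and the factors at $\ell\notin\mathcal{I}$ equal one. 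Overlap (Assumption~\ref{ass:backdoor}) makes this change of measure legitimate, and backdoor admissibility guarantees that the conditioning sets $\tilde c^{(\ell)}$ are the right ones. Assumption~\ref{ass:bounded-weights} gives $\bar{\mathcal{L}}^{\mathcal{I}}_k\in[0,B^{\mathcal{I}}]$.

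\textbf{Step 3: the LS supermartingale and change of measure.} Apply the LS inequality to the shifted loss $B^{\mathcal{I}}-\bar{\mathcal{L}}^{\mathcal{I}}_k\in[0,B^{\mathcal{I}}]$. For $\lambda<1/B^{\mathcal{I}}$ and $x\in[0,B^{\mathcal{I}}]$, the inequality $\log(1-\lambda x)\le -\lambda x$ gives $\mathbb{E}[1-\lambda x]\le \exp(-\lambda\mathbb{E}x)$. Hence
\[
M_K(\theta)=\prod_{k\le K}\bigl(1-\lambda(B^{\mathcal{I}}-\bar{\mathcal{L}}^{\mathcal{I}}_k)\bigr)^{-1}\exp\bigl(-\lambda(B^{\mathcal{I}}-V_\theta)\bigr)
\]
is a nonnegative supermartingale with mean at most $1$; the direction is chosen to yield a lower bound on $V$. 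Integrating against the data-free prior $p_0$ preserves this property. Ville's inequality then gives, with probability $1-\delta$ and for all $K$ simultaneously, $\log\mathbb{E}_{p_0}M_K\le\log(1/\delta)$. Donsker--Varadhan then yields, for every $Q$,
\[
\mathbb{E}_Q\log M_K\le \KL(Q\|p_0)+\log(1/\delta).
\]
Rearranging, dividing by $\lambda K$, and using $V(\pi^{\mathcal{I}}_Q)=\mathbb{E}_Q V_\theta$ produces \eqref{eq:pac-bayes-causal} with $\widehat V^{\mathrm{LS}}_{\mathrm{pes}}$ as in \eqref{eq:ls-estimator}.

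\textbf{Step 4: KL restricted to $\Phi_{\mathcal{I}}$.} This is the step I expect to be the main obstacle. The loss depends on $\theta$ only through $\pi^{(\ell)}_\theta$ for $\ell\in\mathcal{I}$. By the recursion \eqref{eq:R-recursion}, $\hat R^{(\ell)}_{\mathcal{I}}$ depends only on mechanisms at or below level $\ell$, namely $C^{(\ell')}$ for $\ell'<\ell\le\ell^\star_{\mathcal{I}}$, together with $Y$ and $R$; that is, only on $\theta_{\Phi_{\mathcal{I}}}$. The plan is to choose the prior in the change of measure to be the marginal $\prod_{X\in\Phi_{\mathcal{I}}}p_{0,X}$ and to let $Q$ range over distributions on $\theta_{\Phi_{\mathcal{I}}}$ only. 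Equivalently, any extension of $Q$ that agrees with $p_0$ on the complement of $\Phi_{\mathcal{I}}$ leaves the loss unchanged. By KL additivity over product measures the regulariser is then $\sum_{X\in\Phi_{\mathcal{I}}}\KL(Q_X\|p_{0,X})$. The delicate part is checking that greedy tie-breaking in the $\arg\max$ does not reintroduce dependence on the outer mechanisms.

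\textbf{Step 5: rate.} Plug in the assumed per-mechanism KL growth and set $\lambda=1/\sqrt K$, which is admissible for $K>(B^{\mathcal{I}})^2. The remainder is then $O\bigl(\sum_X d_X(\sum_\beta d_\beta+1)\log K/\sqrt K\bigr)$. Finally, the LS estimator exceeds the empirical IS mean minus $O(\lambda B^2)$, which is also $O(1/\sqrt K)$.
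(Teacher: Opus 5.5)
Your Step~3 has the process upside down. Write $\Delta_k:=B^{\mathcal{I}}-\bar{\mathcal{L}}^{\mathcal{I}}_k(\pi^{\mathcal{I}}_\theta)$ and $\overline{\Delta}_\theta:=B^{\mathcal{I}}-V(\pi^{\mathcal{I}}_\theta;\theta^*)$. The product you display, $\prod_{k\le K}(1-\lambda\Delta_k)^{-1}e^{-\lambda\overline{\Delta}_\theta}$, is not a supermartingale. The map $x\mapsto(1-\lambda x)^{-1}$ is convex on $[0,B^{\mathcal{I}}]$, so Jensen and $1-u\le e^{-u}$ give
\[
\mathbb{E}\bigl[(1-\lambda\Delta_k)^{-1}e^{-\lambda\overline{\Delta}_\theta}\,\big|\,\mathcal{F}_{k-1}\bigr]\;\ge\;(1-\lambda\overline{\Delta}_\theta)^{-1}e^{-\lambda\overline{\Delta}_\theta}\;\ge\;1,
\]
with strict inequality whenever $\overline{\Delta}_\theta>0$. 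So it is a submartingale and Ville's inequality does not apply. Even formally, Donsker--Varadhan on its logarithm would upper-bound $-\sum_k\mathbb{E}_Q\log(1-\lambda\Delta_k)-\lambda K\,\mathbb{E}_Q\overline{\Delta}_\theta$. That yields an (invalid) \emph{upper} bound on $V(\pi^{\mathcal{I}}_Q;\theta^*)$, not \eqref{eq:pac-bayes-causal}.

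The inequality you quote, $\mathbb{E}[1-\lambda x]\le e^{-\lambda\mathbb{E}x}$, actually certifies the reciprocal $\prod_{k\le K}(1-\lambda\Delta_k)\,e^{\lambda\overline{\Delta}_\theta}$, and that is the process you need. Ville plus Donsker--Varadhan then give $\lambda K\,\mathbb{E}_Q[\overline{\Delta}_\theta]\le\KL(Q\|p_0)+\log(1/\delta)-\sum_{k}\mathbb{E}_Q[\log(1-\lambda\Delta_k)]$. Dividing by $\lambda K$ and using $\mathbb{E}_Q[\overline{\Delta}_\theta]=B^{\mathcal{I}}-V(\pi^{\mathcal{I}}_Q;\theta^*)$ gives exactly \eqref{eq:pac-bayes-causal}. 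The paper instead uses $\prod_k(1-\lambda\Delta_k)/(1-\lambda\overline{\Delta}_\theta)$, an exact martingale by unbiasedness alone, and converts at the end with $-\log(1-\lambda x)\ge\lambda x$. Both routes land on the same bound.

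With that repair, Steps~1--2 match Lemma~\ref{lem:episode-unbiased}: episode-mean aggregation and the level-factored Radon--Nikodym derivative. Step~4 matches Lemma~\ref{lem:causal-kl}, which the paper handles simply by extending $Q$ with $Q_X=p_{0,X}$ off $\Phi_{\mathcal{I}}$, so those KL terms vanish. The bound then holds for the extended $Q$ regardless of tie-breaking; your dependence argument is only needed to make $\pi^{\mathcal{I}}_Q$ well defined from a $Q$ on $\theta_{\Phi_{\mathcal{I}}}$ alone.

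Your Step~5 controls the slack between the certificate and the empirical IS mean. The paper's rate is instead an oracle excess risk $V(\pi^*;\theta^*)-V(\pi_{\hat Q};\theta^*)$. It obtains this by comparing the empirical maximiser $\hat Q$ with a concentrated $Q^*$ around $\theta^*$, splitting into approximation, estimation and regulariser terms. You would need that comparison to match the statement's excess-risk claim.
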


\textit{Proof sketch.}
\emph{(1) Causal KL decomposition}: the mechanism-wise factorisation makes $Q$ and $p_0$ product distributions over $\Phi$; the chain rule gives $\KL(Q\|p_0) = \sum_X \KL(Q_X\|p_{0,X})$, and for $X \notin \Phi_{\mathcal{I}}$ the convention $Q_X = p_{0,X}$ collapses the sum to $\sum_{X \in \Phi_{\mathcal{I}}}$. \emph{(2) Hybrid IS unbiasedness}: actions are drawn level by level, so the chain rule along the level-ordering yields the level-factored Radon--Nikodym derivative $dP_{\mathrm{hybrid}}/dP_{\mathrm{legacy}} = w^{\mathcal{I}}$ per sample; together with the level-wise backdoor admissibility of Assumption~\ref{ass:backdoor} on $\mathcal{I}$, every per-sample loss has constant conditional mean $\mathbb{E}[\mathcal{L}^{\mathcal{I}}_{k, \mathbf i} \mid \mathcal{H}_{k-1}] = V(\pi^{\mathcal{I}}_\theta; \theta^*)$, and so does the episode mean $\mathbb{E}[\bar{\mathcal{L}}^{\mathcal{I}}_k \mid \mathcal{H}_{k-1}] = V(\pi^{\mathcal{I}}_\theta; \theta^*)$. \emph{(3) Episode-level LS supermartingale}: applying the logarithmic-smoothing construction of \citet[Thm.~3.2]{haddouche2025logarithmic} to the episode-mean gap $\Delta\bar{\mathcal{L}}^{\mathcal{I}}_k := B^{\mathcal{I}} - \bar{\mathcal{L}}^{\mathcal{I}}_k \in [0, B^{\mathcal{I}}]$ across the $K$ off-policy episodes, which are conditionally independent given $\mathcal{H}_0$ (Assumption~\ref{ass:iid}), and combining with Ville's inequality and change of measure (Appendix~\ref{app:proof-pacbayes}, Remark~\ref{rem:ls-adapt}), yields the bound.
\qed

\textbf{Scope and extensions.} PRISM accommodates linear and
non-linear ANM mechanisms, supports any intervention scope
$\mathcal{I} \subseteq [L]$, and covers fixed-$\mu$, on-policy,
and mixed behaviour uniformly via the adaptive
extension to $\mathcal{H}_{k-1}$-measurable $(\mu_k)$
(Corollary~\ref{cor:adaptive}, Appendix~\ref{app:adaptive-extension}).

\subsection{AEGIS: Anytime Episodic Gain-Certified Inference Schedule}
\label{sec:ra-ncts}

\textbf{Per-level certification and safe-deployment selection.}
\label{sec:per-level}
The bound \eqref{eq:pac-bayes-causal} promotes the deployment scope $\mathcal{I}$ from an all-or-nothing decision to a design choice the practitioner optimises against the certificate. Since $\widehat{V}^{\mathrm{LB}}_{\mathcal{I}}$ is a high-probability lower bound on the realised value of deploying $\pi^{\mathcal{I}}_Q$, the safest scope maximises the guarantee:
\begin{equation}
  \mathcal{I}^\star \;=\; \arg\max_{\mathcal{I} \in \mathfrak{I}}\, \widehat{V}^{\mathrm{LB}}_{\mathcal{I}}(\pi^{\mathcal{I}}_Q),
  \label{eq:safe-deploy}
\end{equation}
not necessarily the scope with the highest expected reward, since the multiplicative compounding $B^{\mathcal{I}} \leq R_{\max} \prod_{\ell \in \mathcal{I}} B^{(\ell)}_w$ in $\lambda < 1/B^{\mathcal{I}}$ can loosen the certificate faster than the mean improves.

\begin{wrapfigure}[25 ]{r}{0.52\textwidth}
\vspace{-1.6\baselineskip}
\small
\begin{algorithm}[H]
\caption{AEGIS schedule}
\label{alg:ra-ncts}
\KwIn{Host: $p_0$, $\mathsf{HostUpdate}(p_{k-1}, \mathcal{D}_k){\to}p_k$, per-level rule $\mathsf{HostRule}^{(\ell)}$, predictor $\hat r_\theta$.\
Schedule: $K, L, (I_\ell)_{\ell=1}^L$.\
AEGIS: legacy $(\mu^{(\ell)}_\mathrm{legacy})$, thresholds $(\varepsilon_\ell)$, $\delta, n_\mathrm{mc}, \eta$.}
\KwOut{Sticky handover $(\Sigma_k)$; posteriors $(p_k)$.}
$\Sigma_1 \!\leftarrow\! \emptyset$;\ \ $\mathcal{D}^{\mu,(\ell)}, \mathcal{D}^{\mu,[L]} \!\leftarrow\! \emptyset$\;
\SetKwProg{Fn}{Procedure}{:}{end}
\SetKwFunction{Traverse}{Traverse}
\Fn{\Traverse{$\ell$, $\tilde c^{(\ell+1)}$}}{
  \For{$i_\ell = 1, \ldots, I_\ell$}{
    Sample $C^{(\ell)} \!\sim\! P_{C^{(\ell)}}(\cdot\!\mid\!\tilde c^{(\ell+1)};\theta^*)$;\ extend $\tilde c^{(\ell)} \!\leftarrow\! (\tilde c^{(\ell+1)}, C^{(\ell)})$\;
    \uIf{$\ell \in \Sigma_k$}{$a^{(\ell)} \leftarrow$ \eqref{eq:ra-inner}$[p_{k-1}]$\;}
    \Else{$a^{(\ell)} \sim \mu^{(\ell)}_\mathrm{legacy}(\cdot\!\mid\!\tilde c^{(\ell)})$; append to $\mathcal{D}^{\mu,(\ell)}$, also $\mathcal{D}^{\mu,[L]}$ if $\Sigma_k\!=\!\emptyset$\;}
    $\tilde c^{(\ell)} \!\leftarrow\! (\tilde c^{(\ell)}, a^{(\ell)})$;\
    \lIf{$\ell > 1$}{\Traverse{$\ell - 1$, $\tilde c^{(\ell)}$}}\lElse{observe $Y, R$; append $(\tilde c^{(\ell)}, Y, R)$ to $\mathcal{D}_k$}
  }
}
\For{$k = 1, \ldots, K$}{
  \Traverse{$L$, $()$};\ $p_k \!\leftarrow\! \mathsf{HostUpdate}(p_{k-1}, \mathcal{D}_k)$;\ $\lambda^\star_k \!\leftarrow\!$ \eqref{eq:lam-star}$[p_k]$\;
  \For{$\ell \in [L]\!\setminus\!\Sigma_k$ \textbf{\upshape with} $|\mathcal{D}^{\mu,(\ell)}|>0$}{
    \lIf{$\widehat V^{\mathrm{LS}}_{\mathrm{pes},\ell}(\pi_{p_k};\,\mathcal{D}^{\mu,(\ell)}) \geq V^{(\ell)}_{\mathrm{legacy}} + \varepsilon_\ell$}{$\Sigma_{k+1} \!\leftarrow\! \Sigma_k \cup \{\ell\}$ \emph{(sticky)}}
  }
}
\end{algorithm}
\end{wrapfigure}
Single-level scopes $\mathcal{I} = \{\ell\}$ are the natural building blocks of the progressive workflow advertised in Section~\ref{sec:intro}: deploy at the safest level, collect fresh data under the hybrid behaviour, re-apply Theorem~\ref{thm:pac-bayes-causal} for the next handover candidate. Practical computation of \eqref{eq:safe-deploy} (closed-form $\lambda^\star_{\mathcal{I}}$, $\mathcal{D}_\mathrm{cert}$ re-use, $\log|\mathfrak{I}|$ union-bound cost) is in Appendix~\ref{app:proof-hybrid}.

\textbf{The AEGIS recipe.} AEGIS schedules \eqref{eq:safe-deploy}
on any host with posterior $p_k$, KL to data-free $p_0$, and
per-mechanism predictor $\hat r_\theta$, via three switches: a
risk-adjusted LCB inner policy
\begin{align}
  \pi_k^{\mathrm{RA}}(x) &=\; \arg\max_{a \in \mathcal{A}}
    \widehat\mu_k(a, x) - \kappa_k \widehat\sigma_k(a, x),\\
  \kappa_k &= \sqrt{\tfrac{2(\KL(p_{k-1}\|p_0) + \log(1/\delta))}{k}},
  \label{eq:ra-inner}
\end{align}
($\widehat\mu_k, \widehat\sigma_k$: posterior mean and std of
$\hat r_\theta(a,x)$ under $\theta \sim p_{k-1}$); a closed-form
bound-minimising
\begin{equation}
  \lambda^\star_k \;=\; \min\!\left(\tfrac{1-\eta}{B^{\mathcal{I}}},\;
    \sqrt{\tfrac{2 C_k}{k\, S^2_k}}\,\right),
  \label{eq:lam-star}
\end{equation}
with $C_k = \KL(p_k\|p_0)+\log(1/\delta)$, episode-mean second-moment
$S_k^2$, and clipping margin $\eta$ (full derivation in
App.~\ref{app:ra-ncts-design}); and a sticky per-level handover gate
$\Sigma_k$ that adds $\ell$ to $\Sigma_{k+1}$ once
$\widehat V^{\mathrm{LS}}_{\mathrm{pes},\ell}(\pi_{p_k}) \geq
V^{(\ell)}_{\mathrm{legacy}} + \varepsilon_\ell$. Body experiments
use \emph{AEGIS-NCTS} (AEGIS over Algorithm~\ref{alg:ncts});
per-switch rationale, host instantiations (UCB / IDS / PB / NCTS),
and implementation notes are in App.~\ref{app:ra-ncts-design}.

\section{Experiments}
\label{sec:experiments}

\captionsetup{font=small,skip=3pt}
\captionsetup[sub]{font=footnotesize,skip=2pt}

\subsection{Setup}
\label{sec:exp-setup}

All benchmarks instantiate a single parametric SCM
($SCM_\mathrm{unified}$) with meta-action $M$ outer, inner-action $A$
inner (both bounded $[-2, 2]$), and latent confounder $U$; unless
stated otherwise, every bench runs at the \emph{stress preset}
($\eta{=}1$, $\sigma_Q{=}3$, $J{=}1$, $I{=}20$) with the
\texttt{RFF-GP-Gibbs} posterior backend. The four body bandits share a
common RFF-GP function class and differ only in reward model and
commit shape: \texttt{flat\_ts} (joint regression
$(T,M,C,A){\to}R$, no SCM), \texttt{flat\_cts} (SCM-factorised
$f_C, f_Y, f_R$, flat commit), \texttt{joint\_cts} (SCM-factorised
with $(M,A)$ committed jointly at meta-start), and \texttt{nested\_cts}
(full NCTS, Algorithm~\ref{alg:ncts}); three further bandits used in
App.~\ref{app:exp-contrastive} round out Table~\ref{tab:exp-arms}.
Every reported number is $10$ seeds, $K{=}2000$ episodes,
mean$\,\pm\,1\sigma$ across seeds; significance is Welch's t-test
with reported $t$ and $p$. Per-bench deviations are stated inline;
structural equations, stress-preset rationale, posterior-backend
hyperparameters, and the consolidated deviations table are in
App.~\ref{app:exp-setup}.

\subsection{Constructive Ablations}
\label{sec:exp-constructive}
The primary hypothesis motivating NCTS is that constructing hierarchical bandits exhibits significantly more rewarding behaviour than flat counterparts, for three reasons. \emph{First}, replacing the flat reward function with a factorised SCM-mechanism posterior endows the agent with a model of \emph{why} arms are rewarding, not just which arms are; this is the dominant axis of NCTS's reward gain (sec.~\ref{par:bench-1a}). \emph{Second}, given the SCM model, the recursive meta\,$\to$\,inner structure of Algorithm~\ref{alg:ncts}, which samples $M$ once per meta-step then re-selects $A$ per realised inner context $C$, strictly dominates the natural alternative of committing $(M, A)$ jointly at meta-step start (sec.~\ref{par:bench-1b}). \emph{Third}, we can bound each level's policy in isolation, then run the progressive certified handover workflow of Section~\ref{sec:per-level} where control transfers per level between policies according to a risk-adjusted gate (sec.~\ref{par:bench-1c}). The three benchmarks below address these claims in that order.

\begin{figure}[t]\centering
  \begin{subfigure}[t]{0.32\linewidth}\centering
    \includegraphics[width=\linewidth]{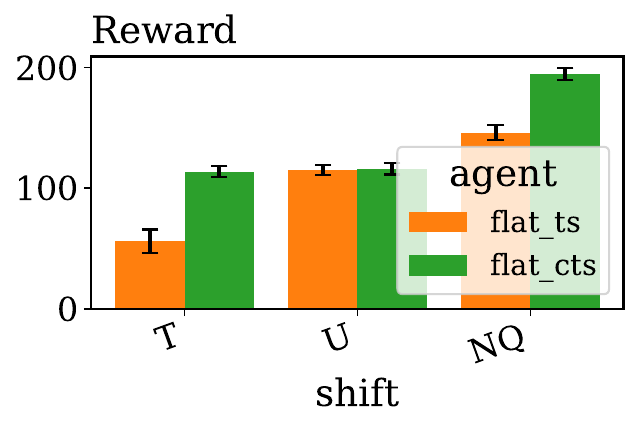}
    \caption{Zero-shot OOD reward.}\end{subfigure}\hfill
  \begin{subfigure}[t]{0.32\linewidth}\centering
    \includegraphics[width=\linewidth]{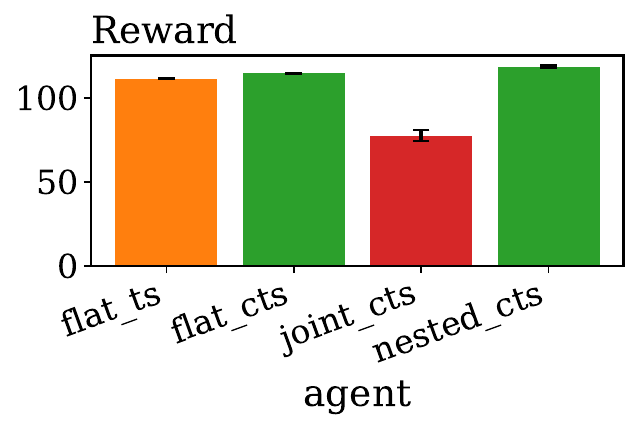}
    \caption{Four-arm decomposition.}\end{subfigure}\hfill
  \begin{subfigure}[t]{0.32\linewidth}\centering
    \includegraphics[width=\linewidth]{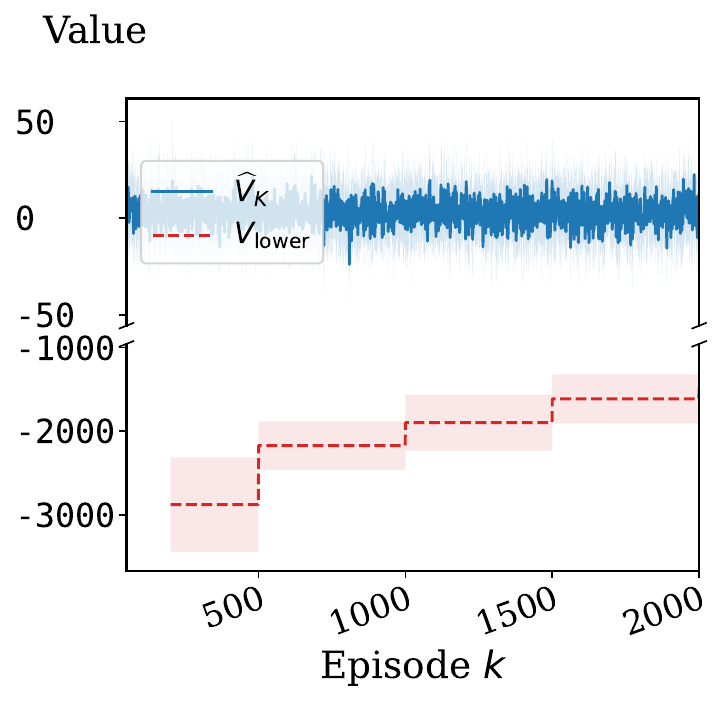}
    \caption{NCTS lower bound converges.}
  \end{subfigure}
  \caption{Constructive ablations: detailed setup, numerics, and
  interpretation in sec.~\ref{par:bench-1a}--sec.~\ref{par:bench-1c}.}
  \label{fig:exp:constructive}
\end{figure}

\subsubsection{SCM factorisation gains \emph{significantly} under exogenous shift with frozen function class.}\label{par:bench-1a}
To isolate the contribution of mechanism factorisation under
distribution shift, we hold the function class fixed across two
bandits and vary only the reward model.
\texttt{flat\_ts} fits a single RFF-GP regression on the joint
reward $T, M, C, A \!\to\! R$;
\texttt{flat\_cts} fits an RFF-GP posterior factorised over the SCM
mechanisms $f_C, f_Y, f_R$. Both are trained on the AMM-nonlinear
source ($K_\mathrm{src}{=}100$), frozen, and replanned zero-shot
onto three exogenously shifted targets ($K_\mathrm{tgt}{=}25$).
The factorised bandit retains a substantial advantage on the two
large shifts: under Shift-$T$ ($\mathcal{N}_T$ mean $0\!\to\!2$) it
attains $\mathbf{113.6{\pm}4.9}$ versus $56.1{\pm}10.1$
($\Delta\!=\!{+}57.5$), and under
Shift-$\mathcal{N}_Q$ ($\sigma_Q\!:\!3\!\to\!6$) it attains
$\mathbf{194.7{\pm}4.9}$ versus $146.1{\pm}6.6$
($\Delta\!=\!{+}48.6$); both contrasts pass an aggregate Welch
test at $\mathbf{p\!<\!10^{-3}}$. On the milder Shift-$U$
($\lambda_U\!:\!0.5\!\to\!0.9$), where $U$ enters only via $f_R$'s
noise channel, the gain narrows to $\Delta\!=\!{+}1.2$
($\mathbf{116.1{\pm}5.0}$ vs.\ $115.0{\pm}4.3$): at $K{=}2000$ the
joint regression has accumulated enough data to compensate on the
mild shift, isolating the SCM factorisation's benefit to genuinely
large exogenous shifts. The joint regression collapses by
approximately $50$ reward whenever the shift exits the source
support, whereas the factorised bandit holds: under the ICM
principle, $f_C, f_Y, f_R$ are invariant and only the exogenous
distribution moves. This furnishes the empirical sufficiency
complement to \citet{richens2024robust}'s necessity result, any
distributionally robust agent must internalise the causal structure
, by demonstrating that being \emph{given} the structure
suffices for robustness on these three shifts
(Fig.~\ref{fig:exp:constructive}a).

\subsubsection{Recursive nested commit \emph{significantly} dominates joint commit.}\label{par:bench-1b}
We next isolate the contribution of the recursive meta-to-inner
commit by adding two further bandits to those of
sec.~\ref{par:bench-1a}. \texttt{joint\_cts} reuses the SCM posterior
of \texttt{flat\_cts} but commits $A$ jointly with $M$ at meta-step
start, conditioning on a representative inner context
$\widehat C\!=\!\mathbb{E}[C\!\mid\!T, M]$ and reusing $A$ for all
$I{=}20$ inner steps; \texttt{nested\_cts} is full NCTS
(Algorithm~\ref{alg:ncts}), which re-samples $A$ at every realised
$C$. Per-bandit reward across $K{=}2000$ episodes (RFF-GP throughout)
is $101.5{\pm}3.1$ for \texttt{flat\_ts}, $108.8{\pm}1.9$ for
\texttt{flat\_cts}, $69.0{\pm}5.8$ for \texttt{joint\_cts}, and
$\mathbf{113.6{\pm}1.9}$ for \texttt{nested\_cts}, decomposing
cleanly into three contributions: causal factorisation alone
contributes ${+}7.3$ reward ($\mathbf{p\!<\!10^{-4}}$); committing
$(M, A)$ jointly then forfeits ${-}39.8$ relative to
\texttt{flat\_cts} ($\mathbf{t\!=\!{-}20.7}$) because $A$ can no
longer adapt to the realised $C_t$; recursive nesting recovers and
extends past this baseline by ${+}44.6$
($\mathbf{p\!<\!10^{-10}}$), for a total NCTS gain of ${+}12.1$
over the matched-class \texttt{flat\_ts} baseline
($\mathbf{p\!<\!10^{-7}}$). The two NCCB-specific contributions are
orthogonal: nesting dominates in distribution, while factorisation
dominates under shift (sec.~\ref{par:bench-1a}). This attribution
hinges on basis-class control: against an unmatched NIG-linear
baseline, the same \texttt{flat\_cts}\,$-$\,\texttt{flat\_ts}
contrast inflates by approximately $105$ reward of pure
function-class misspecification
(Fig.~\ref{fig:exp:constructive}b).

\subsubsection{The PRISM bound (Theorem~\ref{thm:pac-bayes-causal}) \emph{significantly} contracts under the data-split protocol.}\label{par:bench-1c}
We close sec.~\ref{sec:exp-constructive} by demonstrating empirically
that the PRISM bound (Theorem~\ref{thm:pac-bayes-causal}) tightens
as the certification sample size grows. We run \texttt{nested\_cts}
for $K{=}2000$ episodes and evaluate the bound under the data-split
protocol that honours its data-freeness condition on $p_0$:
episodes are assigned independently to a prior or certificate set
with probability $\alpha{=}0.5$. For each plot point
$k \in \{200, 500, 1000, 1500, 2000\}$ we apply
Theorem~\ref{thm:pac-bayes-causal} \emph{separately} on the first
$k$ episodes; $p_0$ is fitted once on the prior portion (size
$\alpha k$) and the episode-mean LS estimator
(Eq.~\ref{eq:ls-estimator}) and $\KL(Q\|p_0)$ are evaluated on the
certificate portion (size $(1-\alpha) k$). The plot is therefore a
sample-size sweep, not an anytime trajectory along a single
Ville-supermartingale; the $1/\sqrt{K}$ rate of
Theorem~\ref{thm:pac-bayes-causal} is what drives the contraction.
The bound contracts approximately $2.0{\times}$ between $k{=}200$
and $k{=}2000$ (Table~\ref{tab:bench-1c}, App.~\ref{app:exp-contrastive}), and a paired $t$-test
confirms that $\widehat V_K \!\geq\! V_\mathrm{lower}$ at every seed
(mean$(\widehat V_K {-} V_\mathrm{lower}){=}{+}1448$,
$t{=}16.6$, $\mathbf{p\!<\!10^{-8}}$). The naive on-policy LS
estimator, by contrast, is approximately $8.7{\times}$ looser
($V_\mathrm{lower}^\mathrm{biased}\!=\!{-}12526{\pm}1915$ vs.\
$V_\mathrm{lower}^\mathrm{split}\!=\!{-}1445{\pm}262$;
paired $t{=}17.9$, $\mathbf{p\!<\!10^{-8}}$) and is not a valid
bound on $V(\pi_{Q_K})$, since the underlying data is generated by
$\pi_{Q_0}, \ldots, \pi_{Q_{K-1}}\!\neq\!\pi_{Q_K}$
(Fig.~\ref{fig:exp:constructive}c). The substantive claim here is
the contraction rate rather than the absolute level: closing the
latter requires AEGIS's LCB inner action and per-level handover
(sec.~\ref{sec:per-level}, App.~\ref{app:exp-aegis-flip}), since
$\KL_\Phi/(\lambda K)$ scales linearly in the RFF dimension
($D{=}128$).

\section{Conclusion}
\label{sec:conclusion}

We introduced the NCCB framework, an $L$-level hierarchical SCM for progressive certified handover across decision timescales, and proposed NCTS as a first concrete algorithm (recursive Thompson sampling, one belief per episode, known graph). Theorem~\ref{thm:pac-bayes-causal} is a causal PAC-Bayes excess-risk bound at rate $O(1/\sqrt{K})$ in episode count, with a $\KL$ regulariser that decomposes along the causal mechanisms; it is off-policy valid for linear and non-linear ANMs and certifies any intervention scope $\mathcal{I} \subseteq [L]$, exposing the safest deployment scope as the certificate-maximising one (Section~\ref{sec:per-level}).
Two structural assumptions stand out. The bound's i.i.d.-inner-steps requirement (Assumption~\ref{ass:iid}) means that lifting the within-episode dynamics to an MDP or POMDP would introduce an extra $H^{1/2}$ factor through the value-function dependence on transitions; \citet{girard2025online}'s self-normalised maximal inequality applied within each episode is the natural avenue for that relaxation. The mechanism class is also restricted to additive-noise models (linear with NIG, nonlinear with online Gibbs), with the non-additive case exercised only as a misspecification stress test absorbed into the $\KL$ term rather than modelled directly.
NCTS is a baseline; the NCCB class admits a wider design space along two axes. First, sharper bounds: PRISM's multiplicative overlap penalty $B^{\mathcal{I}}$ is sidestepped level by level by the progressive workflow of Section~\ref{sec:per-level} but remains loose under one-shot full deployment, and SNIS \citep{owen2000safe}, PAC-Bayes-Bernstein refinements on the per-mechanism variance structure, and self-normalised maximal inequalities \citep{girard2025online} are promising tighteners. Second, alternative NC-X hosts indexed by the per-level rule, including NC-UCB \citep{lu2022efficient}, NC-IDS \citep{russo2014ids}, NC-PB (direct PAC-Bayes minimisation), NC-DOpt \citep{chaloner1995bayesian}, and pessimistic warm-starts \citep{aouali2024unified}, all wrapped by AEGIS (Algorithm~\ref{alg:ra-ncts}) via its host-agnostic interface. Further open problems are collected in App.~\ref{app:further-open}.

\begin{ack}
This work was funded by the European Union (Grant Agreement no. 101120763 - TANGO). Views and opinions expressed are however those of the author(s) only and do not necessarily reflect those of the European Union or the European Health and Digital Executive Agency (HaDEA). Neither the European Union nor the granting authority can be held responsible for them. This work was also supported by the Engineering and Physical Sciences Research Council [EP/S023917/1].
We thank our supervisors and colleagues Kristian Kersting, Thomas Bohné, Moritz Willig, Raban Emunds, Yi-Shan Wu, and Yevgeny Seldin for the fruitful and inspiring discussions and feedback.

\end{ack}

\bibliographystyle{abbrvnat}
\bibliography{references}

\newpage
\appendix

\section{Notation}
\label{app:notation}

\begin{table}[h]
\small
\caption{Notation used throughout the paper. All symbols are introduced
in the body at first use; this table is for reference.}
\label{tab:notation}
\begin{tabular}{@{}l p{0.62\linewidth}@{}}
\toprule
\multicolumn{2}{l}{\textsl{Causal model.}} \\
$\mathcal{G}$, $\mathrm{Pa}(X)$ & Causal structure (notational
shorthand for the parent assignment); parent set of mechanism $X$ \\
$\doIntervention(X{=}x)$ & Pearl intervention setting $X$ to $x$ \\
$\theta^* = (\theta_C^*, \theta_Y^*, \theta_R^*)$ & True SCM mechanism parameters \\
$\Phi$, $d_X := \dim(X)$, $D := \sum_{X \in \Phi} d_X$
  & Set of learnable mechanisms (Def.~\ref{def:nccb});
    per-mechanism output dimension; total parameter dimension \\
\midrule
\multicolumn{2}{l}{\textsl{Decision protocol.}} \\
$K$, $L$, $(I_\ell)_{\ell=1}^L$, $N_L := \prod_\ell I_\ell$
  & Episodes; nesting depth ($\ell{=}1$ innermost, $\ell{=}L$ outermost);
    per-level loop counts; total inner samples per episode \\
$J = I_2$, $I = I_1$, $H = JI = N_2$ & $L{=}2$ legacy notation:
  meta-steps; inner steps; per-episode horizon \\
$\mathcal{A}$, $\mathcal{M}$, $\mathcal{X}$ & Inner-action,
  meta-action, inner-context spaces \\
$T_{k,j}$, $m_{k,j}$, $C_{k,j,i}$, $a_{k,j,i}$, $R_{k,j,i}$
  & Task context, meta-action, inner context, inner action,
    inner reward at episode $k$, meta-step $j$, inner step $i$ \\
$x_{k,j,i} = (m_{k,j}, C_{k,j,i}, T_{k,j}) \sim P_X$
  & Inner-step context tuple; its marginal distribution \\
$\mathcal{D}_k$, $\mathcal{D}_\mathrm{off}$ & Episode-$k$ online data;
  pre-online offline data \\
$\mathcal{D}_\mathrm{prior}$, $\mathcal{D}_\mathrm{cert}$
  & Partition of $\mathcal{D}_\mathrm{off}$ into prior-construction
    and certification portions (Appendix~\ref{app:offline-effect}, Effect~4) \\
\midrule
\multicolumn{2}{l}{\textsl{Policies, value, off-policy estimators.}} \\
$\mu_k$, $\pi_k$, $\pi_Q$ & Behaviour, target, PAC-Bayes-induced policy \\
$w_t = \pi(a_t \mid x_t) / \mu_k(a_t \mid x_t)$
  & Importance weight (Asm.~\ref{ass:bounded-weights}) \\
$R_{\max}$, $B^{(\ell)}_w$, $B^{\mathcal{I}}$
  & Reward bound; per-level weight bound; loss bound $B^{\mathcal{I}} \leq R_{\max} \prod_{\ell \in \mathcal{I}} B^{(\ell)}_w$ \\
$V(\pi; \theta^*)$ & Policy value under true parameters \\
$\widehat{V}_k^{\mathrm{IPS}}$, $\widehat{V}^{\mathrm{LS}}_{\mathrm{pes}}$
  & IPS / LS estimators (Eq.~\ref{eq:ls-estimator}) \\
$\mathcal{L}_t(\pi)$, $c_t = B^{\mathcal{I}} - \mathcal{L}_t$
  & Per-step loss; per-step cost \\
\midrule
\multicolumn{2}{l}{\textsl{PAC-Bayes machinery.}} \\
$\KL(P \| Q)$ & KL divergence \\
$p_0$, $Q$, $p_k$ & Data-free reference prior (Bayesian update of
  $\mathcal{D}_\mathrm{off}$); generic PAC-Bayes posterior;
  episode-$k$ Bayesian posterior \\
$\lambda > 0$ & PAC-Bayes temperature \\
$(\mathcal{F}_t)_{t \geq 0}$, $(\mathcal{H}_k)_{k \geq 0}$
  & Inner-step filtration; episode-level filtration
    ($\mathcal{H}_k = \mathcal{F}_{kJI} \vee \sigma(\theta^*, \mathcal{G},
    \mathcal{D}_\mathrm{prior})$) \\
$F$ & Fisher information (D-optimal design) \\
\bottomrule
\end{tabular}
\end{table}

\section{Instantiation Example: $L = 2$}
\label{app:instantiation-example}

The body's algorithm and experiments specialise the NCCB to $L = 2$ throughout, with the shorthand $T = C^{(2)}$, $M = A^{(2)}$, $C = C^{(1)}$, $A = A^{(1)}$, $J = I_2$, $I = I_1$, $N_2 = JI$ summarised in the notation table (Appendix~\ref{app:notation}) and pictured in Figure~\ref{fig:banner-dag}. The example below records the explicit structural equations, the latent-confounder absorption, and the mechanism class assumed in our experiments; it is not used in any proof and may be skipped on a first reading.

\begin{example}[Linear / non-linear ANM at $L = 2$]
\label{ex:l2-instantiation}
The mechanism-generated $C$, $Y$, $R$ obey
\begin{align}
  C_{k,j,i} &= f_C(m_{k,j}, T_{k,j};\,\theta_C^*) + \varepsilon_{C,k,j,i}',
     \label{eq:scm-c}\\
  Y_{k,j,i} &= f_Y(a_{k,j,i}, C_{k,j,i};\,\theta_Y^*) + \varepsilon_{Y,k,j,i}',
     \label{eq:scm-y}\\
  R_{k,j,i} &= f_R(Y_{k,j,i}, C_{k,j,i}, m_{k,j}, T_{k,j}, a_{k,j,i};\,\theta_R^*)
                   + \varepsilon_{R,k,j,i}',
     \label{eq:scm-r}
\end{align}
where $\varepsilon_{X,k,j,i}' = \delta_{UX}\,U_{k,j,i} + \varepsilon_{X,k,j,i}$ realises Assumption~\ref{ass:markov} via additive loadings $\delta_{UX}$ on a Bernoulli inner-level confounder $U_{k,j,i} \sim \mathrm{Ber}(\lambda_U)$ shared across $\{C, Y, R\}$. The mechanisms fall within the additive noise model (ANM) family \citep{peters2017elements}: \emph{Case~A1} is linear ANM with Gaussian noise, admitting NIG-conjugate posteriors; \emph{Case~A2} is non-linear ANM with additive noise, treated by the online Gibbs posterior of \citet{haddouche2022online}. Identifiability of $V(\pi; \theta^*)$ from logged data uses the conditional expectations $\mathbb{E}[C \mid M, T]$, $\mathbb{E}[Y \mid A, C]$, $\mathbb{E}[R \mid Y, C, M, T, A]$, all available under Assumption~\ref{ass:backdoor}.
\end{example}

\begin{example}[NCTS recursion at $L = 2$]
\label{ex:l2-recursion}
We unroll the recursion of Eqs.~\eqref{eq:pi-ell} and~\eqref{eq:R-recursion} at $L = 2$ for the algorithm and experiments. At $L = 2$, the augmented contexts collapse to $\tilde c^{(2)} = T$ and $\tilde c^{(1)} = (T, M, C) =: x$.

\textbf{Innermost-level closed form (independent of $L$).} At $\ell = 1$, Eq.~\eqref{eq:R-recursion} reduces to $\hat R^{(1)}_{\mathcal{I}}(a, \tilde c^{(1)}; \theta) = \mathbb{E}[R \mid A^{(1)} = a, \tilde c^{(1)}; \theta]$, and the level-$1$ argmax of Eq.~\eqref{eq:pi-ell} is the best-action map
\begin{equation}
  \pi_k(\tilde c^{(1)}) = \arg\max_{a \in \mathcal{A}^{(1)}}\,
    \mathbb{E}\bigl[R \mid A^{(1)} = a,\, \tilde c^{(1)};\, \theta^{(k)}\bigr].
  \label{eq:ts-policy}
\end{equation}
This applies at every $L$: the innermost level is always a contextual bandit whose context is the augmented context $\tilde c^{(1)}$. The expectation unfolds via SCM forward-propagation through $f_Y$ then $f_R$:
\begin{equation}
  \mathbb{E}\!\left[R \mid \doIntervention(A^{(1)}{=}a),\, \tilde c^{(1)};\, \theta^{(k)}\right]
  = \mathbb{E}_{\varepsilon_Y', \varepsilon_R'}\!\left[
    f_R\!\bigl(
      \underbrace{f_Y(a, C^{(1)};\, \theta_Y^{(k)}) + \varepsilon_Y'}_{=:\,\hat{Y}},\,
      \tilde c^{(1)},\, a;\, \theta_R^{(k)}
    \bigr)
    + \varepsilon_R'
  \right].
  \label{eq:scm-forward}
\end{equation}
At $L = 2$, $\tilde c^{(1)} = (T, M, C)$ and the parents of $R$ are $(Y, C, M, T, A)$ as in Eq.~\eqref{eq:scm-r}. For linear ANMs with Gaussian noise, \eqref{eq:scm-forward} is closed-form and the $\arg\max$ in \eqref{eq:ts-policy} reduces to evaluating $|\mathcal{A}^{(1)}|$ linear-quadratic expressions; for non-linear ANMs the expectation is evaluated by Monte Carlo with $n_\mathrm{mc}$ draws.

\textbf{Outer-level argmax at $L = 2$.} At $\ell = 2$, the agent observes $T_{k,j}$ ($\tilde c^{(2)} = T_{k,j}$), and Eq.~\eqref{eq:R-recursion} integrates over the freshly drawn $C \sim P_C(\cdot \mid T_{k,j}, m, \theta_C^{(k)})$. With both levels in $\mathcal{I}$, the $\ell{-}1 \in \mathcal{I}$ case of \eqref{eq:R-recursion} applies, giving the meta-action argmax
\begin{equation}
  m_{k,j} = \arg\max_{m \in \mathcal{M}}\;
    \mathbb{E}_{C \sim P_C(\cdot \mid T_{k,j}, m, \theta_C^{(k)})}\!
    \bigl[ \max_a \mathbb{E}[R \mid A^{(1)}{=}a, C, m, T_{k,j}; \theta^{(k)}] \bigr].
  \label{eq:ck-ts}
\end{equation}

\textbf{D-optimal alternative.} A Bayesian experimental-design alternative \citep{chaloner1995bayesian} replaces the Thompson $\arg\max$ at any chosen level $\ell$ by an information-gain $\arg\max$,
\begin{equation*}
  a^{(\ell), \star}_{k} = \arg\max_{a \in \mathcal{A}^{(\ell)}}\,
    \mathbb{E}_{p_{k-1}}\!\left[\log\det F\bigl(\theta;\, \doIntervention(A^{(\ell)}{=}a),\, \tilde c^{(\ell)}\bigr)\right],
\end{equation*}
where $F$ is the posterior Fisher information. This is tractable in closed form under the factorised posterior (Section~\ref{sec:between}) for linear Gaussian ANMs at any $L$. Specialised to the meta level at $L = 2$ ($\ell = 2$, $a = m$, $\tilde c^{(\ell)} = T_{k,j}$), this gives the $L = 2$ D-optimal meta-rule. Proving that the D-optimal variant accelerates posterior contraction beyond the Thompson Sampling baseline of \eqref{eq:ck-ts} is an open problem.
\end{example}


\section{Full Proof of Theorem~\ref{thm:pac-bayes-causal}}
\label{app:proof-pacbayes}

We prove the causal PAC-Bayesian excess-risk bound in five steps. Throughout this appendix, we work in the \emph{off-policy startup phase} (episodes $1,\ldots,K$ under a fixed legacy policy $\mu = (\mu_M, \mu_A)$). In this phase the meta-action is supplied by the legacy $\mu_M$ rather than chosen by the agent, and together with the i.i.d.\ meta-contexts $T_{k,j} \sim P_T$ (Assumption~\ref{ass:iid}) this makes the per-episode IPS estimates target a common value $V(\pi;\,\theta^*)$.

\textbf{Filtration conventions.}
Throughout the main text, $(\mathcal{F}_t)_{t \geq 0}$ denotes the inner-step-level filtration accumulating all observations up to the $t$-th inner step (flattening $t = (k-1)JI + (j-1)I + i$). In this appendix we work at the \emph{episode level}: $(\mathcal{H}_k)_{k \geq 0}$ (Eq.~\ref{eq:ep-filtration}) is the $\sigma$-algebra generated by $\theta^*$, $\mathcal{G}$, and all episode data $\mathcal{D}_1, \ldots, \mathcal{D}_k$. The two are related by $\mathcal{H}_k = \mathcal{F}_{kJI} \vee \sigma(\theta^*, \mathcal{G})$. Under Assumption~\ref{ass:bounded-weights}(iii), $\mathcal{L}^{\mathcal{I}}_t(\pi) \in [0, B^{\mathcal{I}}]$ and $\widehat{V}_k^{\mathrm{IPS}}(\pi) \in [0, B^{\mathcal{I}}]$.

\subsection{Step 1: Episode-Level Filtration and Unbiasedness}
\label{app:step1}

Define the between-episode filtration:
\begin{equation}
  \mathcal{H}_0 := \sigma(\theta^*,\, \mathcal{G},\, \mathcal{D}_\mathrm{prior}),
  \quad
  \mathcal{H}_k := \sigma\bigl(\mathcal{H}_0,\,
    \mathcal{D}_1,\ldots,\mathcal{D}_k\bigr),
  \quad k \geq 1,
  \label{eq:ep-filtration}
\end{equation}
where $\mathcal{D}_\mathrm{prior} \subseteq \mathcal{D}_\mathrm{off}$ is the portion of offline data used to construct $p_0$ (Eq.~\ref{eq:p0-construction}; $\mathcal{D}_\mathrm{prior} = \emptyset$ recovers the data-free hyper-prior special case discussed in Appendix~\ref{app:offline-effect}, Effect~4); and the LS-summed episodes $\mathcal{D}_1, \ldots, \mathcal{D}_K$ comprise both the certification portion $\mathcal{D}_\mathrm{cert} = \mathcal{D}_\mathrm{off} \setminus \mathcal{D}_\mathrm{prior}$ (re-indexed as offline records) and any subsequent online episodes. By construction, the data-free prior $p_0$ is $\mathcal{H}_0$-measurable (since it is a deterministic function of $\mathcal{D}_\mathrm{prior}$ via Eq.~\ref{eq:p0-construction}); the $\mathcal{D}_k$ ($k \geq 1$) are independent of one another given $\mathcal{H}_0$ (Assumption~\ref{ass:iid}).

\begin{lemma}[Per-sample and per-episode unbiasedness]
\label{lem:episode-unbiased}
Fix any intervention scope $\mathcal{I} \subseteq \{1, \ldots, L\}$. Under Assumptions~\ref{ass:iid} and~\ref{ass:backdoor}, for every $\theta$, every episode $k$, and every within-episode multi-index $\mathbf i \in [N_L]$:
\begin{equation}
  \mathbb{E}\bigl[\mathcal{L}^{\mathcal{I}}_{k, \mathbf i}(\pi^{\mathcal{I}}_\theta) \mid \mathcal{H}_{k-1}\bigr]
  \;=\; \mathbb{E}\bigl[\bar{\mathcal{L}}^{\mathcal{I}}_k(\pi^{\mathcal{I}}_\theta) \mid \mathcal{H}_{k-1}\bigr]
  \;=\; V(\pi^{\mathcal{I}}_\theta;\,\theta^*),
  \label{eq:ep-unbiased}
\end{equation}
where $\bar{\mathcal{L}}^{\mathcal{I}}_k := \frac{1}{N_L}\sum_{\mathbf i \in [N_L]} \mathcal{L}^{\mathcal{I}}_{k, \mathbf i}$ is the episode-mean loss. Moreover, the episode-mean estimates $\{\bar{\mathcal{L}}^{\mathcal{I}}_k(\pi^{\mathcal{I}}_\theta)\}_{k=1}^K$ are conditionally independent given $\mathcal{H}_0$.
\end{lemma}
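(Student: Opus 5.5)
Fix $\theta$ and $\mathcal I$. The plan is to compute the per-sample conditional expectation by disintegrating along the level ordering, then average over $\mathbf i$, and finally read off conditional independence from the data-generating structure in the off-policy phase.

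\emph{Per-sample identity.} Fix $k$ and $\mathbf i=(i_1,\ldots,i_L)$. In the off-policy phase, the sample $(\tilde c^{(1)}_{k,\mathbf i},a^{(1)}_{k,\mathbf i},Y_{k,\mathbf i},R_{k,\mathbf i})$ is generated top-down.
\begin{itemize}
\item $C^{(L)}$ is drawn from $P_{C^{(L)}}$ and $a^{(L)}\sim\mu^{(L)}(\cdot\mid\tilde c^{(L)})$.
\item Then $C^{(L-1)}\sim P_{C^{(L-1)}}(\cdot\mid\tilde c^{(L)},a^{(L)};\theta^*)$, and so on down to $Y,R$ from $f_Y,f_R$ under $\theta^*$.
\end{itemize}
By Assumption~\ref{ass:iid}, this marginal law does not depend on $\mathcal H_{k-1}$ or on earlier siblings within the same parent decision. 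The reason is that $C^{(L)}_k$ is i.i.d.\ across episodes, lower contexts are i.i.d.\ given the parent decision, and $\mu$ is fixed and non-adaptive. The shared latent $U^{(1)}$ is i.i.d.\ per inner step (Assumption~\ref{ass:markov}), so it only enters through the joint law of $(C^{(1)},Y,R)$ at that step.

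Writing the joint density via the chain rule over levels, the legacy law is $\prod_\ell p(c^{(\ell)}\mid\cdot)\,\mu^{(\ell)}(a^{(\ell)}\mid\tilde c^{(\ell)})\cdot p(y,r\mid\cdot)$. The hybrid law is the same product with $\mu^{(\ell)}$ replaced by $\pi^{(\ell)}_\theta$ for $\ell\in\mathcal I$; Assumption~\ref{ass:markov} guarantees that the soft intervention leaves the other mechanisms unchanged. Hence $dP_{\mathrm{hybrid}}/dP_{\mathrm{legacy}}=w^{\mathcal I}_{k,\mathbf i}(\theta)$ as in \eqref{eq:hybrid-weight}, and this is well defined by the overlap part of Assumption~\ref{ass:backdoor}. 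Therefore
$\mathbb E[w^{\mathcal I}R\mid\mathcal H_{k-1}]=\mathbb E_{\mathrm{hybrid}}[R]=V(\pi^{\mathcal I}_\theta;\theta^*)$.
The last equality uses the backdoor admissibility of $\tilde c^{(\ell)}$: conditioning on the observed augmented context suffices to identify the interventional law. Within-level confounding by $U^{(1)}$ enters only through $C^{(1)},Y,R$ and is not a parent of any $A^{(\ell)}$, so reweighting the action factors is valid.

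\emph{Episode mean and independence.} Linearity over the $N_L$ indices gives the episode-mean identity, because each term has the same conditional mean. For conditional independence given $\mathcal H_0$, I would argue as follows.
\begin{itemize}
\item $\mathcal D_k$ is a measurable function of the episode-$k$ exogenous draws, namely $C^{(L)}_k$, the noises $\varepsilon$, the latents $U$, and the legacy action randomisation.
\item These draws are independent across episodes by Assumptions~\ref{ass:markov} and \ref{ass:iid}.
\item The map from these draws to $\mathcal D_k$ depends only on $\theta^*$ and $\mu$, both of which are fixed given $\mathcal H_0$.
\item $\bar{\mathcal L}^{\mathcal I}_k(\pi^{\mathcal I}_\theta)$ is a function of $\mathcal D_k$ and the fixed $\theta$; when $\mathcal D_{\mathrm{prior}}$ is included, $\theta$ may also be chosen $\mathcal H_0$-measurably.
\end{itemize}
Independence then transfers from the draws to the episode means.

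\emph{Main obstacle.} The delicate step is the Radon--Nikodym identity under nesting. $\pi^{(\ell)}_\theta$ at an outer level is evaluated on $\tilde c^{(\ell)}$, which includes earlier higher-level actions. I must check that the hybrid law of sample $\mathbf i$ really equals the law that defines $V(\pi^{\mathcal I}_\theta;\theta^*)$, i.e.\ that $V$ is the per-sample marginal value. Siblings sharing an outer action are dependent, but each sample's marginal is the same, so expectations still work.

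A second subtlety is that $\pi_\theta$ as defined in \eqref{eq:pi-ell} is a deterministic argmax. A deterministic argmax could violate the weight bound, so Assumption~\ref{ass:bounded-weights} is needed. I would treat $\pi_\theta$ as a generic stochastic kernel satisfying that assumption.
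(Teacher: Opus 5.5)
Your proposal is correct and follows essentially the same route as the paper's proof: the level-ordered chain rule gives $dP_{\mathrm{hybrid}}/dP_{\mathrm{legacy}} = w^{\mathcal I}_{k,\mathbf i}(\theta)$, which overlap makes well defined, and backdoor admissibility identifies $\mathbb{E}_{\mathrm{hybrid}}[R]$ with $V(\pi^{\mathcal I}_\theta;\theta^*)$; linearity then covers the episode mean, because every path has the same marginal, and cross-episode independence given $\mathcal{H}_0$ comes from episode-wise generation under fixed $(\mu,\theta^*)$. Your explicit context factors and the observation that $U^{(1)}$ is not a parent of any action are slightly more careful than the paper, while the detour through Assumption~\ref{ass:bounded-weights} is unnecessary: the identity $\mathbb{E}_{\mathrm{legacy}}[w^{\mathcal I}R]=\mathbb{E}_{\mathrm{hybrid}}[R]$ needs only the overlap clause (absolute continuity) and integrable rewards, whether or not $\pi_\theta$ is a deterministic argmax.
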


\begin{remark}[Why the lemma does \emph{not} claim within-episode independence]
\label{rem:no-within-indep}
The samples $\{\mathcal{L}^{\mathcal{I}}_{k, \mathbf i}\}_{\mathbf i \in [N_L]}$ within episode $k$ are \emph{not} mutually conditionally independent given $\mathcal{H}_{k-1}$ alone, because they share within-episode parent-level draws (e.g.\ at $L=2$, all $I_1$ inner-step samples within meta-step $j$ share $(T_{k,j}, m_{k,j})$). Their conditional means coincide regardless: $\mathbb{E}[\mathcal{L}^{\mathcal{I}}_{k, \mathbf i} \mid \mathcal{H}_{k-1}]$ averages over the random within-episode parent draws under $P_X$ and yields the same value $V(\pi^{\mathcal{I}}_\theta;\theta^*)$ for every $\mathbf i$. The episode-level supermartingale of Step~3 only uses this constancy of conditional means, not within-episode independence; cross-episode independence given $\mathcal{H}_0$ is what carries the supermartingale.
\end{remark}

\begin{proof}
\textbf{Sequential factorisation of the trajectory.} Within one episode, the agent draws actions level by level from $\ell = L$ down to $\ell = 1$ (Section~\ref{sec:within}); each action $a^{(\ell)}$ is sampled given the augmented context $\tilde c^{(\ell)}$, which records all higher-level decisions and exogenous/SCM-generated contexts at and above level $\ell$. The chain rule along this fixed level-ordering then factorises the joint trajectory under the legacy behaviour $\mu = (\mu^{(\ell)})_{\ell=1}^L$ as
\[
  P_{\mathrm{legacy}}(a^{(1:L)}, R) = \prod_{\ell=1}^L \mu^{(\ell)}\bigl(a^{(\ell)} \mid \tilde c^{(\ell)}\bigr) \cdot p\bigl(R \mid a^{(1:L)}, \text{contexts};\, \theta^*\bigr).
\]
The hybrid distribution under $\pi^{\mathcal{I}}_\theta$ replaces the policy factor $\mu^{(\ell)}$ by $\pi^{(\ell)}_\theta$ at every $\ell \in \mathcal{I}$ while leaving every other factor unchanged:
\[
  P_{\mathrm{hybrid}}(a^{(1:L)}, R) = \prod_{\ell \in \mathcal{I}} \pi^{(\ell)}_\theta \prod_{\ell \notin \mathcal{I}} \mu^{(\ell)} \cdot p(R \mid a^{(1:L)}, \text{contexts};\, \theta^*).
\]
The Radon--Nikodym derivative is therefore $dP_{\mathrm{hybrid}}/dP_{\mathrm{legacy}} = \prod_{\ell \in \mathcal{I}} \pi^{(\ell)}_\theta / \mu^{(\ell)} = w^{\mathcal{I}}_{k, \mathbf i}$, with the legacy/legacy factors at $\ell \notin \mathcal{I}$ cancelling and the reward factor cancelling because both joints share the same $p(R \mid a^{(1:L)}, \text{contexts};\, \theta^*)$. The level-wise overlap clause of Assumption~\ref{ass:backdoor} on $\mathcal{I}$ ensures these ratios are well defined.

\textbf{Causal identification of the reward expectation.} The reward factor $p(R \mid a^{(1:L)}, \text{contexts};\, \theta^*)$ is the same conditional under both joints, but the conditional expectation $\mathbb{E}[R \mid a^{(1:L)}, \text{contexts};\, \theta^*]$ taken under the legacy joint is the \emph{interventional} reward $\mathbb{E}_\mathrm{hybrid}[R]$ only when each $a^{(\ell)}$ is identifiable as a do-intervention from observed quantities. The level-wise backdoor admissibility (Assumption~\ref{ass:backdoor}) supplies this identification at every $\ell \in \mathcal{I}$. Combining,
\[
  \mathbb{E}\bigl[\mathcal{L}^{\mathcal{I}}_{k, \mathbf i} \mid \mathcal{H}_{k-1}\bigr]
  \;=\; \mathbb{E}\bigl[w^{\mathcal{I}}_{k, \mathbf i} \cdot R_{k, \mathbf i} \mid \mathcal{H}_{k-1}\bigr]
  \;=\; \mathbb{E}_{\mathrm{hybrid}}[R_{k, \mathbf i}]
  \;=\; V(\pi^{\mathcal{I}}_\theta;\, \theta^*),
\]
where the conditional expectation is taken under the marginal of the within-episode SCM-generated parent contexts given $\mathcal{H}_{k-1}$. Linearity of expectation gives the same identity for $\bar{\mathcal{L}}^{\mathcal{I}}_k$, proving \eqref{eq:ep-unbiased}. Cross-episode conditional independence given $\mathcal{H}_0$ holds because each $\mathcal{D}_k$ is generated from the same fixed $(P_X, \mu, \theta^*)$ in the off-policy phase and is independent of $\mathcal{D}_{k'}$ for $k' \neq k$ given $\mathcal{H}_0$ (Assumption~\ref{ass:iid}). The episode-mean estimates $\bar{\mathcal{L}}^{\mathcal{I}}_k$ are deterministic functions of $\mathcal{D}_k$, hence inherit this independence.
\end{proof}

\subsection{Step 2: Causal KL Decomposition}
\label{app:step2}

\begin{lemma}[Causal KL decomposition]
\label{lem:causal-kl}
Let $Q = \prod_{X \in \Phi} Q_X$ and $p_0 = \prod_{X \in \Phi} p_{0,X}$ be product distributions over $\theta_{\Phi} = \bigtimes_{X \in \Phi} \theta_X$, induced by the per-mechanism factorisation. Then:
\begin{equation}
  \KL(Q \| p_0)
  = \sum_{X \in \Phi} \KL(Q_X \| p_{0,X}).
  \label{eq:causal-kl}
\end{equation}
For any intervention scope $\mathcal{I} \subseteq \{1, \ldots, L\}$, under the convention $Q_X = p_{0,X}$ for $X \notin \Phi_{\mathcal{I}}$, the chain rule collapses to a sum over the post-intervention learnable mechanisms:
\begin{equation}
  \KL(Q \| p_0) = \sum_{X \in \Phi_{\mathcal{I}}} \KL(Q_X \| p_{0,X}).
  \label{eq:causal-kl-restricted}
\end{equation}
\end{lemma}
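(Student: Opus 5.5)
The plan is to prove \eqref{eq:causal-kl} directly from the definition of KL divergence, using the product structure of both measures. I first dispose of the degenerate case: if some $Q_X$ is not absolutely continuous w.r.t.\ $p_{0,X}$, then $Q \not\ll p_0$. Concretely, a set $S_X$ with $p_{0,X}(S_X)=0<Q_X(S_X)$ lifts to the cylinder $S_X \times \bigtimes_{X'\neq X}\Theta_{X'}$, which is $p_0$-null but has positive $Q$-mass. In that case both sides of \eqref{eq:causal-kl} equal $+\infty$, since every summand is nonnegative. Otherwise, each $Q_X \ll p_{0,X}$ with density $g_X := dQ_X/dp_{0,X}$.

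In the main case I show that $dQ/dp_0(\theta) = \prod_{X\in\Phi} g_X(\theta_X)$. For this I check that the right-hand side integrates against $p_0$ to $Q$ on measurable rectangles, which follows by Fubini/Tonelli because $p_0$ is a product measure. Rectangles form a $\pi$-system generating the product $\sigma$-algebra, so Dynkin's uniqueness theorem extends the identity to all measurable sets. Then
\[
\log \frac{dQ}{dp_0} = \sum_{X\in\Phi}\log g_X(\theta_X),
\]
and integrating against $Q$ gives $\sum_X \mathbb{E}_{Q}[\log g_X(\theta_X)]$. The $X$-marginal of $Q$ is $Q_X$, so each term equals $\KL(Q_X\|p_{0,X})$.

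The main obstacle is justifying linearity of the integral when the $\log g_X$ are not integrable, since the sum could be of the form $\infty-\infty$. I would handle this by writing each term as $\int (g_X\log g_X - g_X + 1)\,dp_{0,X}$, whose integrand is nonnegative. The same nonnegative form applies to the joint density, where $\int(\prod g_X - 1)\,dp_0 = 0$. Tonelli then applies to this nonnegative decomposition, and the identity holds in $[0,\infty]$. Alternatively, the chain rule for KL with independent conditionals gives the same result directly.

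For \eqref{eq:causal-kl-restricted}, I substitute the convention $Q_X = p_{0,X}$ for $X\in\Phi\setminus\Phi_{\mathcal I}$ into \eqref{eq:causal-kl}. Each such term is $\KL(p_{0,X}\|p_{0,X})=0$, so only the sum over $\Phi_{\mathcal I}$ survives. This is well-posed because $\Phi_{\mathcal I}\subseteq\Phi$ by Definition~\ref{def:intervention-scope}.
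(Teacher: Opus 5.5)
Your proposal is correct and takes the same route as the paper. You identify $dQ/dp_0=\prod_{X\in\Phi} dQ_X/dp_{0,X}$, turn the log of the product into a sum, integrate each summand against its marginal $Q_X$, and get the restricted form by zeroing the terms with $Q_X=p_{0,X}$. Your extra care (the non-absolutely-continuous case, the Dynkin argument for the product density, and integrability) only makes rigorous what the paper's one-line computation leaves implicit; in fact the $\infty-\infty$ worry never arises, because the negative part of each $\log(dQ_X/dp_{0,X})$ is always $Q_X$-integrable.
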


\begin{proof}
The chain rule for KL divergence over product measures: $\KL(Q\|p_0) = \int \log\prod_X (dQ_X/dp_{0,X}) \prod_\beta dQ_\beta = \sum_X \int \log(dQ_X/dp_{0,X})\, dQ_X = \sum_X \KL(Q_X\|p_{0,X})$. The posterior factorises as $p_k(\theta) = \prod_{X \in \Phi} p_k(\theta_X)$ because (i)~the prior factorises by mechanism autonomy \citep[Ch.~1.2]{pearl2000causality}, and (ii)~the likelihood factorises by the Causal Markov Condition. For the restricted form \eqref{eq:causal-kl-restricted}, the convention $Q_X = p_{0,X}$ for $X \notin \Phi_{\mathcal{I}}$ gives $\KL(Q_X \| p_{0,X}) = 0$ for those terms, leaving only the sum over $\Phi_{\mathcal{I}}$.
\end{proof}

\subsection{Step 3: Logarithmic Smoothing Supermartingale}
\label{app:step3}

We adapt the proof of \citet[Thm.~3.2]{haddouche2025logarithmic} from within-deployment rounds to between-episode samples. The key adaptation is that we apply the LS construction to the \emph{episode-mean} loss $\bar{\mathcal{L}}^{\mathcal{I}}_k$ (Lemma~\ref{lem:episode-unbiased}), not the per-sample stream $\mathcal{L}^{\mathcal{I}}_{k, \mathbf i}$; this side-steps the within-episode dependence highlighted in Remark~\ref{rem:no-within-indep}.

\noindent\textbf{Setup.}\; For $\lambda > 0$ and $\theta \in \theta$, define the per-episode gap $\Delta\bar{\mathcal{L}}^{\mathcal{I}}_k(\pi_\theta) := B^{\mathcal{I}} - \bar{\mathcal{L}}^{\mathcal{I}}_k(\pi_\theta) \in [0, B^{\mathcal{I}}]$ (since $\bar{\mathcal{L}}^{\mathcal{I}}_k \in [0, B^{\mathcal{I}}]$ by Asm.~\ref{ass:bounded-weights}(iii)). True gap $\overline{\Delta\bar{\mathcal{L}}}(\pi_\theta;\theta^*) := B^{\mathcal{I}} - V(\pi_\theta;\theta^*)$. For each episode $k$, define
\begin{equation}
  Y_k(\theta) := -\log\!\bigl(1 - \lambda\, \overline{\Delta\bar{\mathcal{L}}}(\pi_\theta;\theta^*)\bigr)
    + \log\!\bigl(1 - \lambda\,\Delta\bar{\mathcal{L}}^{\mathcal{I}}_k(\pi_\theta)\bigr),
  \label{eq:Yk}
\end{equation}
and $f_K(\theta) := \sum_{k=1}^K Y_k(\theta)$. Under Asm.~\ref{ass:bounded-weights} and $\lambda = 1/\sqrt{K} < 1/B^{\mathcal{I}}$ (for $K \geq (B^{\mathcal{I}})^2$), both logarithms are well-defined.

\begin{lemma}[Episode-level LS supermartingale]
\label{lem:supermartingale}
Define $M_k := \mathbb{E}_{\theta \sim p_0}[\exp(f_k(\theta))]$ for $k \geq 1$ and $M_0 := 1$. Under Asm.~\ref{ass:bounded-weights} and Lemma~\ref{lem:episode-unbiased}, $(M_k)_{k \geq 0}$ is a non-negative supermartingale w.r.t.\ $(\mathcal{H}_k)_{k \geq 0}$.
\end{lemma}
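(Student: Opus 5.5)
The plan is to reduce the supermartingale property to a one-step inequality for each fixed $\theta$, then lift it to the $p_0$-mixture by Fubini. Non-negativity and adaptedness come first. Since $\lambda < 1/B^{\mathcal{I}}$ and $\Delta\bar{\mathcal{L}}^{\mathcal{I}}_k \in [0,B^{\mathcal{I}}]$, both logarithms in \eqref{eq:Yk} are finite. Hence $\exp(f_k(\theta))$ is a well-defined positive quantity. It is $\mathcal{H}_k$-measurable because $\bar{\mathcal{L}}^{\mathcal{I}}_k$ is a deterministic function of $\mathcal{D}_k$ and $\theta^*\in\mathcal{H}_0$. The prior $p_0$ is $\mathcal{H}_0$-measurable, so $M_k$ is a non-negative $\mathcal{H}_k$-adapted process. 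Integrability is automatic: for each $\theta$, $\exp(Y_k(\theta)) = (1-\lambda\Delta\bar{\mathcal{L}}^{\mathcal{I}}_k)/(1-\lambda\overline{\Delta\bar{\mathcal{L}}}) \le 1/(1-\lambda B^{\mathcal{I}})$, so $M_k$ is bounded by $(1-\lambda B^{\mathcal{I}})^{-k}$.

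The key step is the one-step identity. Write $\exp(f_k(\theta)) = \exp(f_{k-1}(\theta))\exp(Y_k(\theta))$, where $\exp(f_{k-1}(\theta))$ is $\mathcal{H}_{k-1}$-measurable. Then
\[
\mathbb{E}\bigl[\exp(Y_k(\theta))\mid\mathcal{H}_{k-1}\bigr] = \frac{1-\lambda\,\mathbb{E}[\Delta\bar{\mathcal{L}}^{\mathcal{I}}_k(\pi_\theta)\mid\mathcal{H}_{k-1}]}{1-\lambda\,\overline{\Delta\bar{\mathcal{L}}}(\pi_\theta;\theta^*)}.
\]
This holds because $\exp$ of the second logarithm is exactly the affine quantity $1-\lambda\Delta\bar{\mathcal{L}}^{\mathcal{I}}_k$; this is the whole point of logarithmic smoothing, and no Jensen or quadratic bound is needed. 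Lemma~\ref{lem:episode-unbiased} gives $\mathbb{E}[\bar{\mathcal{L}}^{\mathcal{I}}_k\mid\mathcal{H}_{k-1}] = V(\pi^{\mathcal{I}}_\theta;\theta^*)$. The numerator therefore equals the denominator, and the ratio is exactly $1$. So for each fixed $\theta$, $\exp(f_k(\theta))$ is in fact a martingale. The construction uses only the constancy of the conditional mean, not within-episode independence, as anticipated in Remark~\ref{rem:no-within-indep}.

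Finally, integrate over $\theta\sim p_0$. Because $p_0$ is $\mathcal{H}_0$-measurable (data-free, or built from $\mathcal{D}_\mathrm{prior}\in\mathcal{H}_0$) and the integrand is non-negative and jointly measurable, conditional Fubini/Tonelli gives
\[
\mathbb{E}[M_k\mid\mathcal{H}_{k-1}] = \mathbb{E}_{\theta\sim p_0}\bigl[\exp(f_{k-1}(\theta))\,\mathbb{E}[\exp(Y_k(\theta))\mid\mathcal{H}_{k-1}]\bigr] = M_{k-1}.
\]
At $k=1$, $f_0\equiv 0$, which is consistent with $M_0=1$. The sequence is thus a (super)martingale.

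The main obstacle I expect is justifying the interchange of conditional expectation and the $p_0$-integral. This requires that $p_0$ not depend on $\mathcal{D}_1,\dots,\mathcal{D}_k$, which is exactly the data-freeness/data-split condition. It also requires that the conditional expectation in Lemma~\ref{lem:episode-unbiased} hold simultaneously for all $\theta$ on a common version. I would handle the latter by taking a regular conditional distribution of $\mathcal{D}_k$ given $\mathcal{H}_{k-1}$, which exists since the data live in Polish spaces. Under that distribution, $\theta\mapsto\mathbb{E}[\exp(Y_k(\theta))\mid\mathcal{H}_{k-1}]$ is computed pointwise and equals $1$ for every $\theta$.
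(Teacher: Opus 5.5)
Your proposal is correct and follows the paper's proof essentially step for step. You factor $e^{f_k}=e^{f_{k-1}}e^{Y_k}$ and use the $\mathcal{H}_0$-measurability of $p_0$ to move the conditional expectation inside the $p_0$-integral; since $e^{Y_k}$ is an affine ratio, Lemma~\ref{lem:episode-unbiased} then gives $\mathbb{E}[e^{Y_k(\theta)}\mid\mathcal{H}_{k-1}]=1$ exactly, so $(M_k)$ is in fact a martingale, as the paper also notes. Your explicit integrability bound $M_k\le(1-\lambda B^{\mathcal{I}})^{-k}$ and your regular-conditional-distribution argument, which gives a jointly measurable version valid simultaneously in $\theta$, usefully tighten the Fubini step that the paper only asserts.
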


\begin{proof}
We show $\mathbb{E}[M_k \mid \mathcal{H}_{k-1}] \leq M_{k-1}$. Since $p_0$ is $\mathcal{H}_0$-measurable (Eq.~\ref{eq:ep-filtration}; $p_0$ depends only on $\mathcal{D}_\mathrm{prior}$, and $\mathcal{H}_0 \subseteq \mathcal{H}_{k-1}$), the law $p_0$ is fixed under conditioning on $\mathcal{H}_{k-1}$, so Fubini gives $\mathbb{E}[M_k \mid \mathcal{H}_{k-1}] = \mathbb{E}_{\theta\sim p_0}[e^{f_{k-1}(\theta)} \mathbb{E}[e^{Y_k(\theta)} \mid \mathcal{H}_{k-1}]]$ (noting that $f_{k-1}(\theta)$ is $\mathcal{H}_{k-1}$-measurable in $\omega$ and continuous in $\theta$, hence jointly measurable, so Fubini applies). The exponent $Y_k(\theta)$ is a single log-ratio, so
\begin{equation}
  \mathbb{E}\bigl[e^{Y_k(\theta)} \mid \mathcal{H}_{k-1}\bigr]
  \;=\; \frac{\mathbb{E}\!\bigl[1 - \lambda\, \Delta\bar{\mathcal{L}}^{\mathcal{I}}_k(\pi_\theta) \,\big|\, \mathcal{H}_{k-1}\bigr]}{1 - \lambda\, \overline{\Delta\bar{\mathcal{L}}}(\pi_\theta;\theta^*)}.
  \label{eq:ratio-product}
\end{equation}
By Lemma~\ref{lem:episode-unbiased}, $\mathbb{E}[\bar{\mathcal{L}}^{\mathcal{I}}_k(\pi_\theta) \mid \mathcal{H}_{k-1}] = V(\pi_\theta;\theta^*)$, hence $\mathbb{E}[\Delta\bar{\mathcal{L}}^{\mathcal{I}}_k(\pi_\theta) \mid \mathcal{H}_{k-1}] = B^{\mathcal{I}} - V(\pi_\theta;\theta^*) = \overline{\Delta\bar{\mathcal{L}}}(\pi_\theta;\theta^*)$. The numerator equals $1 - \lambda\, \overline{\Delta\bar{\mathcal{L}}}(\pi_\theta;\theta^*)$, the ratio is $1$, and $\mathbb{E}[M_k \mid \mathcal{H}_{k-1}] = M_{k-1}$, which is in particular $\leq M_{k-1}$.\footnote{The conditional expectation is in fact equal to $M_{k-1}$, so $(M_k)$ is a martingale; we state the supermartingale property because it is what Ville's inequality (Step~4) requires, and because the same construction yields a strict supermartingale under the adjusted estimator of \citep[Sec.~4]{haddouche2025logarithmic}. Crucially, the argument uses only the constancy of the conditional mean $\mathbb{E}[\bar{\mathcal{L}}^{\mathcal{I}}_k \mid \mathcal{H}_{k-1}]$ across $k$, never within-episode independence (Remark~\ref{rem:no-within-indep}).}
\end{proof}

\subsection{Step 4: Ville's Inequality and Change of Measure}
\label{app:step4}

By Ville's inequality \citep[Ch.~11]{williams1991probability}, since $(M_k)$ is a non-negative supermartingale with $M_0 = 1$:
\begin{equation}
  \Pr\!\bigl[\exists\, k \geq 1: M_k \geq 1/\delta\bigr] \leq \delta.
  \label{eq:ville}
\end{equation}
On $\{M_K < 1/\delta\}$, apply Donsker--Varadhan: for any posterior $Q$,
\begin{equation}
  \mathbb{E}_{\theta \sim Q}[f_K(\theta)]
  \leq \KL(Q \| p_0) + \log M_K \leq \KL(Q \| p_0) + \log(1/\delta).
  \label{eq:change-measure}
\end{equation}
Expanding $\mathbb{E}_Q[f_K(\theta)]$ via the definition of $Y_k(\theta)$:
\begin{align}
  \mathbb{E}_Q[f_K(\theta)]
  &= -K\, \mathbb{E}_Q\!\bigl[\log(1 - \lambda\, \overline{\Delta\bar{\mathcal{L}}}(\pi_\theta;\theta^*))\bigr]
    + \sum_{k=1}^K \mathbb{E}_Q\!\bigl[\log(1 - \lambda\, \Delta\bar{\mathcal{L}}^{\mathcal{I}}_k(\pi_\theta))\bigr].
  \label{eq:expand-fK}
\end{align}
Jensen's inequality applied to the convex $-\log(1-\lambda x)$ in $\theta$ (for $\theta \mapsto \overline{\Delta\bar{\mathcal{L}}}(\pi_\theta;\theta^*) = B^{\mathcal{I}} - V(\pi_\theta;\theta^*)$ linear in $\theta$ via $V$ being an expectation) gives $-\mathbb{E}_Q[\log(1 - \lambda\, \overline{\Delta\bar{\mathcal{L}}}(\pi_\theta;\theta^*))] \geq -\log(1 - \lambda\, (B^{\mathcal{I}} - V(\pi_Q^{\mathcal{I}};\theta^*)))$. Combining with \eqref{eq:change-measure}, dividing by $\lambda K$, and using $-\log(1-\lambda x)/\lambda \geq x$ for $\lambda x < 1$:
\begin{equation}
  V(\pi^{\mathcal{I}}_Q;\,\theta^*)
  \geq \widehat{V}^{\mathrm{LS}}_{\mathrm{pes}}(\pi^{\mathcal{I}}_Q)
  - \frac{\sum_{X \in \Phi}
    \KL(Q_X \| p_{0,X}) + \log(1/\delta)}{\lambda K},
  \label{eq:gen-bound}
\end{equation}
with $\widehat{V}^{\mathrm{LS}}_{\mathrm{pes}}(\pi^{\mathcal{I}}_Q) = B^{\mathcal{I}} + \frac{1}{\lambda K}\sum_{k=1}^K \mathbb{E}_Q[\log(1-\lambda\, \Delta\bar{\mathcal{L}}^{\mathcal{I}}_k(\pi_\theta))]$ as in Eq.~\eqref{eq:ls-estimator}, and we used the causal KL decomposition (Lemma~\ref{lem:causal-kl}) to replace $\KL(Q\|p_0)$ by $\sum_{X \in \Phi_{\mathcal{I}}} \KL(Q_X\|p_{0,X})$. The bound holds simultaneously for all factorised $Q$ and all $K \geq 1$ (time-uniform, inherited from Ville).

\subsection{Step 5: Excess-Risk Bound}
\label{app:step5}

Let $\pi^* := \arg\max_\pi V(\pi;\theta^*)$ denote the optimal policy on the true environment, and let $\theta^* \in \theta$ be any parameter for which $\pi_{\theta^*} = \pi^*$ (assumed to exist and lie in the support of $p_0$ by Assumption~\ref{ass:prior}). Let $\hat{Q} := \arg\max_Q [\widehat{V}^{\mathrm{LS}}_{\mathrm{pes}}(\pi_Q) - \KL(Q\|p_0)/(\lambda K)]$.

For any reference distribution $Q^*$ supported in a neighbourhood of $\theta^*$ (e.g.\ $Q^* = \mathcal{N}(\theta^*, \sigma^2 I_D)$ with $\sigma > 0$ small), applying \eqref{eq:gen-bound} to $\hat{Q}$ and using the optimality of $\hat{Q}$ in the empirical objective yields
\begin{align}
  V(\pi^*;\theta^*) - V(\pi_{\hat{Q}};\theta^*)
  &\leq \underbrace{V(\pi^*;\theta^*) -
    V(\pi_{Q^*};\theta^*)}_{\text{(a) approximation error}}
  + \underbrace{V(\pi_{Q^*};\theta^*) -
    \widehat{V}^{\mathrm{LS}}_{\mathrm{pes}}(\pi_{Q^*})}_{\text{(b) estimation error at } Q^*}\notag\\
  &\quad + \underbrace{\frac{\KL(Q^* \| p_0) +
    \log(1/\delta)}{\lambda K}}_{\text{(c) regulariser}}.
  \label{eq:excess-oracle}
\end{align}
Term~(a) vanishes as $\sigma \to 0$ provided $V$ is continuous in $\theta$ at $\theta^*$ (which holds for ANM mechanisms with continuous $f_X$). Term~(b) concentrates at rate $O(1/\sqrt{K})$ because $\bar{\mathcal{L}}^{\mathcal{I}}_k \in [0, B^{\mathcal{I}}]$ is iid across $k$ given $\mathcal{H}_0$ (Lemma~\ref{lem:episode-unbiased}) with mean $V(\pi_{Q^*};\theta^*)$; standard Hoeffding gives the rate (alternatively \citep[Prop.~3.1]{haddouche2025logarithmic} adapted to episode-mean aggregation). Term~(c): for $p_0$ Gaussian and $Q^* = \mathcal{N}(\theta^*, \sigma^2 I_D)$, $\KL(Q^*\|p_0) = O(D \log(1/\sigma))$. Choosing $\sigma = K^{-c}$ for any $c > 0$ makes (a) vanish at any polynomial rate while leaving (c) at $O(D \log K / (\lambda K))$. Setting $\lambda = 1/\sqrt{K}$ and using the per-mechanism factorisation $\KL(Q^* \| p_0) = \sum_X \KL(Q^*_X \| p_{0,X})$ with $\KL(Q^*_X \| p_{0,X}) = O(d_X (\sum_{\beta \in \mathrm{Pa}(X)} d_\beta + 1) \log K)$ for ANM posteriors with bounded score \citep{haddouche2025logarithmic}, the total regulariser is $O(p_\Phi \log K / \sqrt{K})$ with $p_\Phi = \sum_{X \in \Phi} d_X (\sum_{\beta\in\mathrm{Pa}(X)} d_\beta + 1)$. This is the $O(1/\sqrt{K})$ rate of the theorem. \hfill$\square$

\subsection{Discussion}
\label{app:pacbayes-discussion}

\textbf{Role of the causal factorisation.} The KL regulariser in
\eqref{eq:gen-bound} sums over $|\Phi|$ terms, each scaling with $d_X (\sum_\beta d_\beta + 1)$. Without the causal factorisation, it would scale with the full joint count $\sum_{X \in \mathbf{X}} d_X (\sum_\beta d_\beta + 1)$. For $\mathcal{I} = \{1\}$ ($\Phi_{\{1\}} = \{Y, R\}$), the saving relative to $\Phi = \{C, Y, R\}$ is $d_C(d_M + d_T + 1)$ parameters (coefficients of the parent set $\{M, T\}$ plus an intercept, per output dimension); substantial when $d_C$, $d_M$, or $d_T$ is large.

\begin{remark}[NCCB episodes as nested deployment rounds]
\label{rem:ls-adapt}
Theorem~3.2 of \citet{haddouche2025logarithmic} treats each deployment round as a single per-round random loss $\mathcal{L}_k \in [0, B]$ with $\mathbb{E}[\mathcal{L}_k \mid \mathcal{H}_{k-1}] = V(\pi)$. We instantiate this round-level template with $\mathcal{L}_k := \bar{\mathcal{L}}^{\mathcal{I}}_k$ (the episode-mean loss of Lemma~\ref{lem:episode-unbiased}), turning each NCCB episode of $N_L$ within-episode samples into one round-level data point. The $N_L$ samples within an episode are \emph{not} i.i.d.\ given $\mathcal{H}_{k-1}$ (they share within-episode parent draws; Remark~\ref{rem:no-within-indep}), and the supermartingale construction does \emph{not} require them to be: it operates on the episode mean, whose conditional expectation is the same scalar $V(\pi^{\mathcal{I}}_\theta;\theta^*)$ for every $k$ (Lemma~\ref{lem:episode-unbiased}). The supermartingale property carries over because (i)~the episode-mean is conditionally unbiased; (ii)~$\mu$ and $P_X$ are fixed across episodes in the off-policy phase, so $\bar{\mathcal{L}}^{\mathcal{I}}_k$ are conditionally iid given $\mathcal{H}_0$; and (iii)~$p_0$ is data-free w.r.t.\ $\mathcal{D}_1, \ldots, \mathcal{D}_K$. We adopt the unadjusted Thm.~3.2 with a single global temperature $\lambda = 1/\sqrt{K}$ at the episode level; the accelerated ``Adjusted LS'' variant of their Sec.~4 is straightforward but outside our scope. The within-episode samples are still useful: they reduce the variance of the empirical episode mean $\bar{\mathcal{L}}^{\mathcal{I}}_k$ around its conditional expectation, which tightens the empirical LS term in Eq.~\eqref{eq:ls-estimator} (Jensen: aggregating samples before the log is tighter than averaging after). The adaptive online setting with $\mathcal{H}_{k-1}$-measurable $\mu_k$ is treated in Appendix~\ref{app:adaptive-extension} below.
\end{remark}

\subsection{Adaptive online extension and mixed-policy data}
\label{app:adaptive-extension}

Theorem~\ref{thm:pac-bayes-causal} treats $\mu$ as fixed across episodes, so the level-wise factor cancellation $\mu^{(\ell)}/\mu^{(\ell)}=1$ at $\ell \notin \mathcal{I}$ in \eqref{eq:hybrid-weight} is automatic. Two operational regimes break this assumption: \emph{(i)} the online phase, where the agent acts under its own evolving policy from episode to episode; and \emph{(ii)} progressive handover (Section~\ref{sec:per-level}), where successive certification rounds reuse data collected under partially-handover-deployed behaviour. A single adaptive corollary covers both.

\begin{corollary}[Adaptive online extension]
\label{cor:adaptive}
Let $(\mu_k)_{k\geq 1}$ be a sequence of behaviour policies (each $\mu_k = (\mu_k^{(\ell)})_{\ell=1}^L$ specifying level-wise action choices), with $\mu_k$ $\mathcal{H}_{k-1}$-measurable (e.g., NCTS draws $\theta^{(k)} \sim p_{k-1}$ and acts via $\pi^{(\ell)}_{\theta^{(k)}}$ at every $\ell \in \mathcal{I}$). Suppose Assumption~\ref{ass:backdoor} holds with $\mu_k^{(\ell)}$ in place of $\mu^{(\ell)}$ for every $k$ and every $\ell \in \mathcal{I}$, and $\sup_k \sup_{m, \tilde c^{(\ell)}} \pi^{(\ell)}_\theta(m \mid \tilde c^{(\ell)}) / \mu_k^{(\ell)}(m \mid \tilde c^{(\ell)}) \leq B^{(\ell)}_w$ for every $\ell \in \mathcal{I}$ ($\mathcal{H}_{k-1}$-uniform per-level overlap). Then Theorem~\ref{thm:pac-bayes-causal} holds with the per-sample weight $w^{\mathcal{I},\,(\mu_k)}_{k,\mathbf{i}}(\theta) = \prod_{\ell \in \mathcal{I}} \pi^{(\ell)}_\theta / \mu_k^{(\ell)}$ in place of $w^{\mathcal{I}}_{k,\mathbf{i}}$, and the corresponding episode-mean $\bar{\mathcal{L}}^{\mathcal{I},\,(\mu_k)}_k := \frac{1}{N_L}\sum_{\mathbf i} w^{\mathcal{I},\,(\mu_k)}_{k,\mathbf{i}}(\theta) R_{k, \mathbf i}$ in place of $\bar{\mathcal{L}}^{\mathcal{I}}_k$, certifying the per-episode-averaged value $\frac{1}{K}\sum_k V(\pi^{\mathcal{I}}_Q; \theta^*, \mu_k^{\mathcal{I}^c})$ and covering simultaneously: \emph{(i)} the offline startup ($\mu_k = \mu$ fixed legacy), \emph{(ii)} the online learning phase ($\mu_k^{(\ell)} = \pi^{(\ell)}_k$ for $\ell \in \mathcal{I}$, giving $B^{\mathcal{I}} = R_{\max}$ and unit policy ratios), and \emph{(iii)} any mixture across episodes. The bound certifies any $Q$, in particular $Q = p_K$ (final posterior) or $Q = p_k$ for any $k \leq K$ (anytime over $K$).
\end{corollary}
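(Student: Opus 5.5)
We will re-run the five-step argument of Appendix~\ref{app:proof-pacbayes} with the fixed legacy $\mu$ replaced by the predictable sequence $(\mu_k)$. The structure of the proof is unchanged; only Step~1 (unbiasedness) and Step~3 (the supermartingale) need to be redone. The natural target is no longer a single scalar $V(\pi^{\mathcal{I}}_\theta;\theta^*)$. It is the episode-indexed value $V_k(\theta) := V(\pi^{\mathcal{I}}_\theta;\theta^*,\mu_k^{\mathcal{I}^c})$ of the hybrid policy that plays $\pi_\theta$ on $\mathcal{I}$ and the current behaviour $\mu_k$ off $\mathcal{I}$. Because $\mu_k$ is $\mathcal{H}_{k-1}$-measurable, $V_k(\theta)$ is $\mathcal{H}_{k-1}$-measurable, and this predictability is all the supermartingale needs.

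\emph{Step~1$'$ (conditional unbiasedness).} Conditionally on $\mathcal{H}_{k-1}$, $\mu_k$ is a fixed policy. The sequential factorisation in the proof of Lemma~\ref{lem:episode-unbiased} then gives the Radon--Nikodym derivative between the hybrid trajectory law (with $\pi_\theta$ on $\mathcal{I}$ and $\mu_k$ off $\mathcal{I}$) and the $\mu_k$-trajectory law as exactly $\prod_{\ell\in\mathcal{I}}\pi^{(\ell)}_\theta/\mu^{(\ell)}_k$. The off-scope factors cancel because both laws use $\mu_k^{(\ell)}$ there. Overlap and backdoor admissibility for $\mu_k$ then yield
\[
\mathbb{E}\bigl[\bar{\mathcal{L}}^{\mathcal{I},(\mu_k)}_k \mid \mathcal{H}_{k-1}\bigr]=V_k(\theta).
\]
The uniform overlap bound keeps $\bar{\mathcal{L}}^{\mathcal{I},(\mu_k)}_k\in[0,B^{\mathcal{I}}]$. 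I will deliberately not claim cross-episode independence given $\mathcal{H}_0$, since it fails once $\mu_k$ depends on past data. Only the conditional-mean identity will be used.

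\emph{Step~3$'$ (supermartingale).} Define
\[
Y_k(\theta):=-\log\bigl(1-\lambda(B^{\mathcal{I}}-V_k(\theta))\bigr)+\log\bigl(1-\lambda\Delta\bar{\mathcal{L}}_k\bigr),
\]
and set $M_k=\mathbb{E}_{p_0}[e^{f_k}]$ as before. The first term is $\mathcal{H}_{k-1}$-measurable, so it factors out of the conditional expectation, and the argument of Lemma~\ref{lem:supermartingale} applies verbatim: the ratio in \eqref{eq:ratio-product} equals $1$ by Step~1$'$. Hence $(M_k)$ is a nonnegative (super)martingale. Ville's inequality and Donsker--Varadhan (Step~4) then hold for all $K$ and all factorised $Q$ simultaneously. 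Lemma~\ref{lem:causal-kl} reduces the KL term to $\Phi_{\mathcal{I}}$, which is unchanged because the causal decomposition of the prior does not depend on the behaviour policy.

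\emph{Main obstacle: the averaging step.} Expanding $\mathbb{E}_Q[f_K]$ now produces
\[
-\sum_k\mathbb{E}_Q\log\bigl(1-\lambda(B^{\mathcal{I}}-V_k(\theta))\bigr),
\]
which must be lower-bounded by $\lambda\sum_k(B^{\mathcal{I}}-V_k(\pi_Q))$. First, the elementary inequality $-\log(1-\lambda x)\ge\lambda x$ turns each term into a linear one. Second, linearity of $V_k$ in the mixture $Q$ gives $\mathbb{E}_Q V_k(\theta)=V(\pi^{\mathcal{I}}_Q;\theta^*,\mu_k^{\mathcal{I}^c})$. So Jensen is not needed at all, which also sidesteps the loose convexity step in Step~4. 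Dividing by $\lambda K$ then certifies $\frac1K\sum_kV(\pi^{\mathcal{I}}_Q;\theta^*,\mu_k^{\mathcal{I}^c})$.

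\emph{Special cases.} Case~(i) reduces to the theorem, since all $V_k$ are equal. In case~(ii), $\mathcal{I}^c$ is unaffected and the in-scope ratios use $\mu_k=\pi_k$, so $B^{(\ell)}_w=1$ only for the policy actually played; certifying a different $\pi_\theta$ still needs the stated uniform overlap. Case~(iii) is immediate, since nothing above used stationarity of $(\mu_k)$. For anytime use with $Q=p_k$, the bound is uniform in $Q$ on the Ville event, so a data-dependent choice of $Q$ is allowed.

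The remaining subtlety to check is that $\lambda<1/B^{\mathcal{I}}$ holds uniformly. This requires the overlap bound to be $\mathcal{H}_{k-1}$-uniform, as the corollary assumes.
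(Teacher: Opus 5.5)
Your proposal is correct and follows the paper's own argument. The paper likewise rests on the $\mathcal{H}_{k-1}$-measurability of $V(\pi^{\mathcal{I}}_\theta;\theta^*,\mu_k^{\mathcal{I}^c})$ and re-derives the conditional-mean identity of Lemma~\ref{lem:episode-unbiased} via the level-wise $\mu_k^{(\ell)}$ cancellation and backdoor identification; it then runs Lemma~\ref{lem:supermartingale}, Ville's inequality and the change of measure unchanged, with the uniform overlap keeping $1-\lambda\Delta\bar{\mathcal{L}}^{\mathcal{I},(\mu_k)}_k>0$ for $\lambda<1/B^{\mathcal{I}}$. Your termwise use of $-\log(1-\lambda x)\ge\lambda x$ plus linearity of the mixture value in $Q$ (in place of Jensen), and your caveat that unit ratios in case~(ii) hold only for the policy actually played, make explicit and sharpen what the paper leaves as ``goes through unchanged,'' but they are not a different route.
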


\begin{proof}
The key observation is that $V(\pi^{\mathcal{I}}_\theta; \theta^*, \mu_k^{\mathcal{I}^c})$ is $\mathcal{H}_{k-1}$-measurable. Lemma~\ref{lem:episode-unbiased} extends to adaptive $\mu_k$: the identity $\mathbb{E}[\mathcal{L}^{\mathcal{I},\,(\mu_k)}_{k, \mathbf i}(\pi^{\mathcal{I}}_\theta) \mid \mathcal{H}_{k-1}] = V(\pi^{\mathcal{I}}_\theta; \theta^*, \mu_k^{\mathcal{I}^c})$ holds because (a)~$\mu_k$ is $\mathcal{H}_{k-1}$-measurable, so conditioning on $\mathcal{H}_{k-1}$ fixes $\mu_k$; (b)~the level-wise $\mu_k^{(\ell)}/\mu_k^{(\ell)}$ cancellation in the IPS step still produces $\sum_{a^{(\ell)}} \pi^{(\ell)}_\theta(a^{(\ell)}\mid \tilde c^{(\ell)})\, \mathbb{E}[R\mid a^{(\ell)}, \tilde c^{(\ell)}; \theta^*]$ at every $\ell \in \mathcal{I}$; (c)~level-wise backdoor identification (Assumption~\ref{ass:backdoor} on $\mathcal{I}$) holds independently of $\mu_k$. Linearity gives the same identity for $\bar{\mathcal{L}}^{\mathcal{I},\,(\mu_k)}_k$. The supermartingale construction (Lemma~\ref{lem:supermartingale}) and Ville's inequality go through unchanged with $f_K(\theta) := \sum_k Y_k(\theta)$ constructed from $\Delta\bar{\mathcal{L}}^{\mathcal{I},\,(\mu_k)}_k = B^{\mathcal{I}} - \bar{\mathcal{L}}^{\mathcal{I},\,(\mu_k)}_k$. The $\mathcal{H}_{k-1}$-uniform per-level overlap together with Asm.~\ref{ass:bounded-weights}(i) ensures $\bar{\mathcal{L}}^{\mathcal{I},\,(\mu_k)}_k \in [0, B^{\mathcal{I}}]$ and $1 - \lambda \Delta\bar{\mathcal{L}}^{\mathcal{I},\,(\mu_k)}_k > 0$ uniformly for $\lambda < 1/B^{\mathcal{I}}$. This is the discretised, mechanism-factored, scope-aware analogue of the adaptive PAC-Bayes machinery of \citet{haddouche2022online} with the logarithmic smoother of \citet[Thm.~3.2]{haddouche2025logarithmic}.
\end{proof}

\textbf{Mixed-policy data: data-generating scope $\neq$ deployment scope.} Corollary~\ref{cor:adaptive} certifies the time-averaged value $\frac{1}{K}\sum_k V(\pi^{\mathcal{I}}_Q; \theta^*, \mu_k^{\mathcal{I}^c})$, with $\mu_k^{\mathcal{I}^c}$ the per-episode behaviour at uncontrolled levels. In progressive handover this matters: the cancellation $\mu^{(\ell)}/\mu^{(\ell)}=1$ at $\ell \notin \mathcal{I}$ in \eqref{eq:hybrid-weight} requires $\mathcal{D}_\mathrm{cert}$ to have been collected under $\mu_k$ at every uncontrolled level, but as earlier handover candidates are accepted, the $\mu_k^{\mathcal{I}^c}$ evolves and the corresponding factors no longer cancel; the full $L$-product weight is then required. Crucially, the candidate $\pi^{(\ell)}_\theta$ at $\ell > 1$ depends on the deployment scope $\mathcal{I}$ via the recursive value definition~\eqref{eq:R-recursion}: the level-$\ell$ greedy is computed under the assumption that descendants follow $\pi^{(\ell')}_\theta$ for $\ell' \in \mathcal{I}, \ell' < \ell$, and $\mu^{(\ell')}$ otherwise. Hence the data-generating scope (call it $\mathcal{I}_\mathrm{data}$) and the deployment scope $\mathcal{I}$ are not interchangeable, even when both are subsets of $[L]$. Reusing data across handover rounds therefore requires recomputing both $\pi^{(\ell)}_\theta$ (under the new $\mathcal{I}$) and the full level-wise IS weight~\eqref{eq:hybrid-weight}.

\subsection{Practical computation of safe-deployment selection}
\label{app:proof-hybrid}

The proof of Theorem~\ref{thm:pac-bayes-causal} (Steps~1--5 of Appendix~\ref{app:proof-pacbayes}) handles arbitrary intervention scope $\mathcal{I} \subseteq [L]$ uniformly: Lemma~\ref{lem:episode-unbiased} establishes hybrid-IS unbiasedness via the level-factored Radon--Nikodym derivative under level-wise overlap (Assumption~\ref{ass:backdoor} on $\mathcal{I}$), Lemma~\ref{lem:causal-kl} delivers the restricted KL chain rule via the convention $Q_X = p_{0,X}$ for $X \notin \Phi_{\mathcal{I}}$, and the LS supermartingale construction (Lemma~\ref{lem:supermartingale}) applies verbatim to the episode-mean gap $\Delta\bar{\mathcal{L}}^{\mathcal{I}}_k = B^{\mathcal{I}} - \bar{\mathcal{L}}^{\mathcal{I}}_k$, requiring $\lambda < 1/B^{\mathcal{I}}$.

\textbf{Computation of \eqref{eq:safe-deploy}.} For each candidate $\mathcal{I} \in \mathfrak{I}$, the bound is evaluated on the same $\mathcal{D}_\mathrm{cert}$ as Theorem~\ref{thm:pac-bayes-causal}, with three quantities recomputed: the per-sample weights $\{w^{\mathcal{I}}_{k, \mathbf i}\}$ (one product per record, aggregated to episode-means $\bar{\mathcal{L}}^{\mathcal{I}}_k$), the loss bound $B^{\mathcal{I}} = R_{\max} \cdot \max_{k, \mathbf i} w^{\mathcal{I}}_{k, \mathbf i}$ (sets the upper limit on $\lambda$), and the relevant mechanism set $\Phi_{\mathcal{I}}$ (sets the KL summation). The optimal temperature in the leading-order expansion of \eqref{eq:pac-bayes-causal} is closed-form,
\begin{equation*}
  \lambda^\star_{\mathcal{I}} \;=\; \sqrt{\frac{2\bigl(\sum_{X \in \Phi_{\mathcal{I}}} \KL(Q_X\|p_{0,X}) + \log(1/\delta)\bigr)}{K \cdot \widehat{S}^2_{\mathcal{I}}}}, \quad \widehat{S}^2_{\mathcal{I}} := \frac{1}{K} \sum_{k=1}^K \mathbb{E}_Q\!\bigl[(B^{\mathcal{I}} - \bar{\mathcal{L}}^{\mathcal{I}}_k)^2\bigr],
\end{equation*}
clipped to $\min(\lambda^\star_{\mathcal{I}}, c/B^{\mathcal{I}})$ for some $c < 1$ to stay strictly inside the domain of \eqref{eq:pac-bayes-causal}. Enumerating $\mathfrak{I}$ has cost linear in $|\mathfrak{I}|$ in the bound evaluation and constant in the data, since $\mathcal{D}_\mathrm{cert}$ is shared.

\section{How offline data and the legacy policy shape the bound}
\label{app:offline-effect}

The bound interacts with two quantities a practitioner controls before the online phase: the data $\mathcal{D}_\mathrm{off}$ from which the reference prior $p_0$ is built, and the legacy controller $\mu$ under which $\mathcal{D}_\mathrm{off}$ was collected (and which acts during the off-policy startup phase). The reference prior $p_0$ in Theorem~\ref{thm:pac-bayes-causal} must be data-free with respect to the data used in the LS estimator. It need not be uninformed. Concretely, the workflow constructs $p_0$ from a portion $\mathcal{D}_\mathrm{prior} \subseteq \mathcal{D}_\mathrm{off}$ of the offline data via a single Bayesian update,
\begin{equation}
  p_0(\theta) \;\propto\; p_\mathrm{hyper}(\theta) \;\cdot\!\!
  \prod_{(x, a, R) \in \mathcal{D}_\mathrm{prior}}\!\! p(R \mid x, a; \theta),
  \label{eq:p0-construction}
\end{equation}
where $p_\mathrm{hyper}$ is any data-free hyper-prior (e.g.\ the broad NIG of Appendix~\ref{app:env-params}). The remainder $\mathcal{D}_\mathrm{cert} := \mathcal{D}_\mathrm{off} \setminus \mathcal{D}_\mathrm{prior}$ enters the LS estimator alongside any online episodes $\mathcal{D}_1, \ldots, \mathcal{D}_K$ collected later under behaviour policies $\mu_k$. Since $p_0$ is a deterministic function of $\mathcal{D}_\mathrm{prior}$ alone, it qualifies as the data-free reference prior of Theorem~\ref{thm:pac-bayes-causal}; the two roles (Bayesian prior for the NCTS update loop and PAC-Bayes reference for the certificate) are served by the same object. The data split between $\mathcal{D}_\mathrm{prior}$ and $\mathcal{D}_\mathrm{cert}$ is itself a design choice, taken up explicitly in Effect~4 below. We now examine how the size, alignment, and coverage of $\mathcal{D}_\mathrm{off}$ enter the certificate.

\textbf{Effect 1: Precision via prior concentration.}
The regulariser term in Theorem~\ref{thm:pac-bayes-causal} scales as $\KL(Q^* \| p_0) / (\lambda K)$. For Gaussian per-mechanism marginals $p_{0,X} = \mathcal{N}(\theta_{0,X}, \Sigma_{0,X})$ and concentrated $Q^*_X = \mathcal{N}(\theta^*_X, \sigma^2 I)$ with $\sigma \to 0$, the per-mechanism KL is
\begin{equation}
  \KL(Q^*_X \| p_{0,X}) \;=\;
  \tfrac{1}{2}(\theta^*_X - \theta_{0,X})^{\!\top}\!
    \Sigma_{0,X}^{-1}(\theta^*_X - \theta_{0,X})
  \;+\; O\!\left(d_X \log(1/\sigma)\right).
  \label{eq:kl-gaussian}
\end{equation}
The first term is a Mahalanobis distance from the prior centre $\theta_{0,X}$ to the truth $\theta^*_X$, weighted by the inverse prior covariance $\Sigma_{0,X}^{-1}$. Two regimes: \emph{(a)~tight and well-aligned prior}, $\Sigma_{0,X}^{-1}$ large and $\theta_{0,X} \approx \theta^*_X$, gives a small KL and a tight certificate; \emph{(b)~tight but misaligned prior}, $\Sigma_{0,X}^{-1}$ large and $\theta_{0,X}$ far from $\theta^*_X$, inflates the Mahalanobis distance and \emph{loosens} the certificate. The asymmetry of the KL is the protection mechanism: a prior that confidently concentrates mass in the wrong place pays for that confidence in the bound.

\textbf{Effect 2: Accuracy via legacy suboptimality.}
The bound certifies $V(\pi_{\hat Q}; \theta^*)$ in absolute terms, not relative to $V(\mu; \theta^*)$. The certified \emph{gain} over the legacy controller is therefore
\begin{equation}
  V(\pi_{\hat Q}; \theta^*) \;-\; V(\mu; \theta^*)
  \;\geq\;
  \widehat{V}^{\mathrm{LS}}_{\mathrm{pes}}(\pi_{\hat Q})
  \;-\; \widehat{V}_K^{\mathrm{IPS}}(\mu)
  \;-\; \frac{\KL(\hat{Q}\|p_0) + \log(2/\delta)}{\lambda K},
  \label{eq:gain-bound}
\end{equation}
where $\widehat{V}_K^{\mathrm{IPS}}(\mu)$ is the trivially unbiased on-policy estimate of $V(\mu)$ under $\mu$ itself (computed as the empirical mean of episode-mean rewards under $\mu$). The right-hand side is large precisely when $\mu$ is suboptimal and $\pi_{\hat Q}$ has been certified high. Theorem~\ref{thm:pac-bayes-causal} thus delivers an \emph{accuracy boost}: the worse the legacy, the larger the certifiable improvement, with no dependence on the gap entering the regulariser. The expected reward $V(\mu; \theta^*)$ does not appear in Theorem~\ref{thm:pac-bayes-causal}, only on the gain side of \eqref{eq:gain-bound}.

\textbf{Effect 3: Coverage, not quality, of the legacy.}
Suboptimality of $\mu$ (Effect~2) is not the same as misbehaviour. The bound depends on $\mu$ through (i) the overlap clause of Assumption~\ref{ass:backdoor} and (ii) the loss bound $B^{\mathcal{I}}$ of Assumption~\ref{ass:bounded-weights}; never through $V(\mu)$. A uniformly random high-coverage legacy yields a tighter certificate than a near-optimal but concentrated one: the latter inflates $B^{\mathcal{I}}$ and may violate overlap entirely (forcing $\lambda < 1/B^{\mathcal{I}}$), widening the bound. Certifying improvement therefore requires $\mu$ to be \emph{informative}, not good; a feature of PAC-Bayesian off-policy guarantees, not a bug.

\textbf{Effect 4: Pre-TS certification via data splitting.}
Theorem~\ref{thm:pac-bayes-causal} requires $K \geq 1$ episode-mean records in the LS sum and a reference prior $p_0$ that is data-free with respect to those records. A practitioner with only offline data $\mathcal{D}_\mathrm{off}$ (no online interaction yet) reaches certification by data-splitting: partition $\mathcal{D}_\mathrm{off}$ (taken to comprise $N$ offline episodes, each producing a within-episode batch of $N_L$ samples) into a prior portion $\mathcal{D}_\mathrm{prior}$ ($\alpha N$ episodes) and a certification portion $\mathcal{D}_\mathrm{cert}$ ($(1-\alpha) N$ episodes, $\alpha \in [0, 1)$), construct $p_0$ from $\mathcal{D}_\mathrm{prior}$ via Eq.~\eqref{eq:p0-construction}, and evaluate $\widehat{V}^{\mathrm{LS}}_{\mathrm{pes}}(\pi_Q)$ on $\mathcal{D}_\mathrm{cert}$ ($K = (1-\alpha) N$ episodes contributing one episode-mean each). Setting $Q = p_0$ collapses the regulariser to $\log(1/\delta)/(\lambda K)$ since $\KL(p_0 \| p_0) = 0$, yielding the tightest pre-TS certificate available from $\mathcal{D}_\mathrm{off}$. The choice of $\alpha$ is itself actionable: at fixed total budget $N$, the bound is convex in $\alpha$ with an interior minimiser (Benchmark~1.c shows $\alpha^\star = 0.5$ for the configurations tested, with both reward and bound peaking at the same value, Spearman $\rho = -1$ across the interior-optimum portion). The boundary case $\alpha = 0$ is data-free by construction: $p_0$ reduces to the broad NIG hyper-prior of Appendix~\ref{app:env-params} and the LS estimator uses all $N$ episodes. This is the special case in which no offline information enters the prior, so the regulariser pays $\KL(Q \| p_\mathrm{hyper}) = O(D \log N)$ for any informative $Q$ learned from the data; loose relative to splitting, but applicable when offline records are scarce or the practitioner declines to pre-commit to a single split.

\textbf{Self-diagnosis without ground truth.}
A practitioner cannot separate Effect~1's two regimes (well-aligned vs misaligned prior) by inspecting~$p_0$ alone, because the truth $\theta^*_X$ in \eqref{eq:kl-gaussian} is unknown. The bound itself provides the diagnostic. A misaligned prior manifests in two observable symptoms: \emph{(i)} a large $\KL(Q_K \| p_0)$ as the online posterior pulls away from $p_0$ towards parameters consistent with the LS estimator, inflating the regulariser; and \emph{(ii)} a low $\widehat{V}^{\mathrm{LS}}_{\mathrm{pes}}(\pi_{Q_K})$ when policies favoured by~$p_0$ are revealed by online data to underperform. Either symptom widens the certificate or makes it fail to certify improvement over the incumbent. The bound is therefore self-conservative: \emph{it does not falsely tighten in the wrong-direction case, it loosens or refuses to certify}.

\section{AEGIS Design Choices and Algorithm Coverage}
\label{app:ra-ncts-design}

Section~\ref{sec:ra-ncts} presents AEGIS's three deployment-time modifications in their final form. This appendix collects: the host instantiations covered by the recipe, the per-switch design justifications, an implementation note tying the switches to a single configurable agent class, and the gate's correctness clauses on legacy-action logging and post-handover data buffers.

\subsection*{Host instantiations}

The four host families currently covered by Theorem~\ref{thm:pac-bayes-causal} are exhaustive of the algorithms producing a Bayesian posterior $p_k$ with a defined $\KL(p_k \| p_0)$ to a data-free prior together with a per-mechanism reward predictor $\hat r_\theta(a, x)$:
\begin{itemize}[leftmargin=1.4em,itemsep=1pt,topsep=2pt]
\item \emph{AEGIS-NCTS}: AEGIS over Algorithm~\ref{alg:ncts}; the host used by the body experiments.
\item \emph{AEGIS-UCB}: per-mechanism upper-confidence-bound exploration during data collection, AEGIS-LCB at deployment.
\item \emph{AEGIS-IDS}: information-directed sampling \citep{russo2014learning} as the exploration rule, identical handover gate.
\item \emph{AEGIS-PB}: direct PAC-Bayes minimisation $Q_k = \arg\min_Q\bigl(-\widehat V^{\mathrm{LS}}_{\mathrm{pes}}(\pi_Q) + \KL(Q\|p_0)/(\lambda K)\bigr)$ replacing passive Bayesian updating.
\end{itemize}
The three switches touch only $p_k$, $\hat r_\theta$, and the legacy log $(x_t, a^\mu_t, R_t)$, so they apply verbatim across the four hosts; none of them depend on \emph{how} $p_k$ was produced.

\subsection*{Why the LCB form of \eqref{eq:ra-inner}}

The deviation factor $\kappa_k = \sqrt{2(\KL + \log(1/\delta))/k}$ is the PAC-Bayesian deviation rate of \citet{seldin2012pac}, evaluated at episode $k$ on accumulated data $M_k = k$ (one episode-mean per round). It is large when the posterior is broad (early episodes) and shrinks at $O(1/\sqrt{k})$, recovering single-sample TS asymptotically. At $\kappa_k = 0$, \eqref{eq:ra-inner} reduces to greedy posterior-mean acting; at $\kappa_k$ from \eqref{eq:ra-inner}, it is the action-grid analogue of \citet{aouali2024unified}'s pessimistic posterior policy.

Two alternatives we considered and rejected:
\emph{(a) Pessimistic Thompson sampling} (sample $\theta$ from a tilted lower-quantile of $p_{k-1}$). Equivalent in expectation but reintroduces the per-step sampling variance the LCB removes; loses the deterministic-action advantage at deployment time.
\emph{(b) McAllester-style $\sqrt{\KL/k}$.} The Logarithmic-Smoothing supermartingale of \citet[Thm.~3.2]{haddouche2025logarithmic} that underlies Theorem~\ref{thm:pac-bayes-causal} gives the Seldin form directly; using a McAllester rate would either require a separate proof or pay a redundant constant factor.

\subsection*{Why the closed-form of \eqref{eq:lam-star}}

The derivation expands $-\log(1-\lambda c) = \lambda c + \lambda^2 c^2/2 + O(\lambda^3 c^3)$ in \eqref{eq:ls-estimator}, then minimises the leading-order $\lambda$-dependent terms in $\widehat V^{\mathrm{IPS}} - \frac{\lambda}{2} S^2 - C/(\lambda M)$. The first-order condition gives $\lambda^\star = \sqrt{2C/(M S^2)}$; the higher-order correction is $O((\lambda B)^3)$ and immaterial for $\lambda B \ll 1$ (which holds throughout our experiments). The clip $(1-\eta)/B$ enforces the domain constraint with safety margin $\eta$.

Comparison: \citet{haddouche2022online}'s adaptive online PAC-Bayes uses a grid search over $\lambda$ each episode. \eqref{eq:lam-star} replaces the grid with a single algebraic step at no statistical cost, since both compute $\arg\min_\lambda B_k(\lambda)$ on the same data.

\subsection*{Why sticky per-level $\Sigma_k$, not adaptive}

The sticky semantics ($\ell$ in $\Sigma_k \implies \ell$ in $\Sigma_{k'}$ for all $k' > k$) is a deliberate restriction. An adaptive variant that allows $\ell$ to leave $\Sigma_k$ would make the realised behaviour of episode $k+1$ a function of all of $\mathcal{F}_k$, but Theorem~\ref{thm:pac-bayes-causal} certifies a fixed deployment scope, not a re-optimised one. In a regulated handover setting (the practical use case), reverting from agent control back to legacy control is also operationally disruptive and rarely warranted.

\subsection*{Implementation note}

The three switches above are flags on a single \texttt{ICTSAgent} class in the codebase: \texttt{risk\_adjust\_inner}, \texttt{optimal\_lambda}, \texttt{progressive\_handover}. The YAML preset \texttt{ra\_ncts\_nig.yaml} flips all three on. With all three off, \texttt{ICTSAgent} produces bit-exact vanilla NCTS (Algorithm~\ref{alg:ncts}); the preset flag is data, not type. This is also why the body of Section~\ref{sec:ra-ncts} keeps two algorithm boxes rather than collapsing into one with conditional branches: the proof in Appendix~\ref{app:proof-pacbayes} reasons about Algorithm~\ref{alg:ncts}'s policy directly and the three switches are independent ablation knobs over it.

\subsection*{Edge-case implementation notes}

\textbf{Why the gate needs only legacy actions, not legacy densities.}
A deterministic legacy $\mu^{(\ell)}$ has $\mu^{(\ell)}(a^\mu_t \mid x_t) \equiv 1$ at every observed action by construction, so the IS ratio in \eqref{eq:ls-estimator} collapses to $\pi_{p_k}^{(\ell)}(a^\mu_t \mid x_t) \cdot R_t$, which the agent can compute from its own posterior $p_{k}$. For the joint scope $[L]$ the IS ratio becomes the product $\prod_{\ell \in [L]}\pi_{p_k}^{(\ell)}(a^\mu_t\mid x_t)$, still without any $\mu$ density. The empirical leg $\widehat V^{\mathrm{IPS}}_K(\mu)$ is simply the average reward observed under $\mu$ during the corresponding legacy-phase steps; data is on-policy under $\mu$, so no IS reweighting is needed. The gate therefore requires logging $(x_t, a^\mu_t, R_t)$ tuples only.

\textbf{Single-level scope while a higher level has flipped.}
Per-level certificates $\mathcal{D}^{\mu,(\ell)}$ keep accumulating after the \emph{other} level's gate has flipped, but the data distribution shifts: e.g.\ once $1 \in \Sigma_k$, the inner-level data in $\mathcal{D}^{\mu,(2)}$ is generated under $(\mu^{(2)}_{\mathrm{legacy}}, \pi^{(1)}_{p_{k}})$ rather than full-legacy. By Theorem~\ref{thm:pac-bayes-causal} this still certifies the value of the post-handover hybrid $(\pi^{(2)}_Q, \pi^{(1)}_{\mathrm{behavior}})$, not $(\pi^{(2)}_Q, \mu^{(1)}_{\mathrm{legacy}})$. Practitioners targeting the latter should freeze $\mathcal{D}^{\mu,(2)}$ at the moment $\Sigma_k$ first becomes non-empty.

\section{Environment Parameters}
\label{app:env-params}

We instantiate the NCCB (Definition~\ref{def:nccb}) for the linear-ANM setting (Case~A1) used in the planned experiments (Section~\ref{sec:experiments}). All variables are scalar ($d_X = 1$).

\textbf{Domains.} Task type $T \in \{1,\ldots,3\}$ (observed, exogenous);
machine setting $M \in \{1,\ldots,5\}$ (between-episode); action $A \in \mathcal{A} = \{1,\ldots,10\}$ (within-episode); context $C \in \mathbb{R}$, yield $Y \in \mathbb{R}$, reward $R \in [0,1]$ (clipped). Discretisation: $n = 10$ values per scalar component.

\textbf{Legacy logging policy $\mu = (\mu_M, \mu_A)$.} In the offline
startup phase, $M$ and $A$ are produced by a legacy logging policy $\mu$. For the simulation, $\mu_M$ draws $M$ per episode as $M = 0.6\,T + 1.0 + \varepsilon_M$, $\varepsilon_M \sim \mathcal{N}(0, 0.5^2)$, and $A_t \sim \mu_A(\cdot \mid M, C_t, T)$ is drawn uniformly over $\mathcal{A}$.

\textbf{True mechanism parameters.}
\begin{align*}
  C &= 0.8\,M + \varepsilon_C', &\varepsilon_C' &\sim \mathcal{N}(0, \sigma_{C'}^2), \\
  Y &= 0.5\,A + 0.4\,C + \varepsilon_Y', &\varepsilon_Y' &\sim \mathcal{N}(0, \sigma_{Y'}^2), \\
  R &= \mathrm{clip}_{[0,1]}\!\bigl(0.3\,Y + 0.08\,C + 0.1\,M + 0.05\,T + 0.02\,A + \varepsilon_R'\bigr), &\varepsilon_R' &\sim \mathcal{N}(0, \sigma_{R'}^2).
\end{align*}

\textbf{Confounder.} $U_t \sim \mathrm{Ber}(0.3)$ i.i.d.\ per round.
Compound noise: $\delta_{UC} = 0.2$, $\delta_{UY} = 0.15$, $\delta_{UR} = 0.1$. Marginalised variances: $\sigma_{C'}^2 = \delta_{UC}^2 \lambda_U(1{-}\lambda_U) + \sigma_C^2 \approx 0.10$, similarly for $\sigma_{Y'}^2, \sigma_{R'}^2$ (with $\sigma_C=0.3,\sigma_Y=0.2,\sigma_R=0.15$).

\textbf{Prior hyperparameters (NIG).} For each mechanism $X \in
\{C,Y,R\}$, $p_0(\theta_X, \sigma_X^2) = \mathrm{NIG}(\mu_0^{(X)}, \Lambda_0^{(X)}, a_0^{(X)}, b_0^{(X)})$ with $\mu_0^{(X)} = \mathbf{0}$, $\Lambda_0^{(X)} = 0.1\,\mathbf{I}$, $a_0^{(X)} = 2$, $b_0^{(X)} = 1$. These ensure $p_0(\theta^*) > 0$ (Assumption~\ref{ass:prior}).

\section{Experimental Setup Details}
\label{app:exp-setup}

This appendix expands the sec.~\ref{sec:exp-setup} stubs: full
$SCM_\mathrm{unified}$ structural equations, the stress preset,
posterior-backend hyperparameters, and the per-bench deviations
table.

\textbf{Structural equations.} $SCM_\mathrm{unified}$ is a single
parametric SCM with five axes ($\gamma, \eta, \nu, \sigma_Q, q$)
selected per bench via Hydra overrides:
\begin{align*}
T &\sim \mathcal{N}(0, 0.3),\quad U \sim \mathrm{Bern}(0.5),\\
M &= 0.8 T + N_M,\quad N_M \sim \mathcal{N}(0, 0.2),\\
Q &= \gamma\bigl((1{-}\eta)\,1.5 M + \eta\,2 \tanh(2 M)\bigr) + 0.3 U \\
  &\quad + \bigl((1{-}\nu) + \nu(1 + 0.4\gamma|M|)\bigr)N_Q,
    \;N_Q \sim \mathcal{N}(0, \sigma_Q),\\
A &\sim \mathcal{N}(0, 0.1),\\
Y &= (1{-}q)\,A Q + q\bigl(\!-(A {-} 0.5 Q)^2 + 4\bigr) + 0.3 U + N_Y,
     \;N_Y \sim \mathcal{N}(0, 0.08),\\
R &= Y + N_R,\;N_R \sim \mathcal{N}(0, 0.5).
\end{align*}
Axis semantics: $\gamma$ (lever strength on $M{\to}Q$);
$\eta$ (mechanism nonlinearity, linear $1.5 M$ vs.\ saturating
$2\tanh(2 M)$); $\nu$ (AMM homoscedastic vs.\ NANM heteroscedastic
noise, $\propto |M|$); $\sigma_Q$ (inner-context noise std);
$q$ (bilinear $A Q$ reward vs.\ quadratic $-(A{-}0.5 Q)^2 + 4$ with
interior optimum $A^\star\!=\!0.5 Q$). Hierarchy: meta-action $M$
outer, inner-action $A$ inner, both bounded $[-2, 2]$. Confounder
$U$ is latent; identifiability holds via the Causal Markov property
over $(T, M, Q, A, Y, R)$.

\textbf{Stress preset.} Unless stated otherwise, every benchmark
runs at the stress configuration: $\eta\!=\!1$ (nonlinear $f_Q$),
$\nu\!=\!0$ (AMM), $\sigma_Q\!=\!3$ (high inner-noise:
$P(\mathrm{sign}(Q)\!\neq\!\mathrm{sign}(M))\!\approx\!26\%$ at
$|M|\!=\!2$), $q\!=\!0$ (bilinear), $J\!=\!1$ meta-step per episode,
$I\!=\!20$ inner steps. This regime operationalises the abstract's
``$M$ costly to change between episodes, $A$ cheap'' framing; below
this noise floor the high-leverage signals (factorisation+commit shape
in sec.~\ref{par:bench-1a}/sec.~\ref{par:bench-1b},
$|\mathcal{Z}|$-vs-$|\mathcal{A}_S|$ in sec.~\ref{par:bench-3b}) are not
empirically separable.

\textbf{Posterior backends.} Two backends, both implementing the
Theorem~\ref{thm:pac-bayes-causal} bound. \texttt{NIG-Gibbs}: the
linear-conjugate Normal-Inverse-Gamma posterior with closed-form
update (used as a misspec contrast against the high-noise
$f_Y\!=\!A Q$ at $\sigma_Q\!=\!3$). \texttt{RFF-GP-Gibbs}:
$D{=}128$ random Fourier features with RBF kernel (length-scale
$\gamma\!=\!1$) and Gibbs posterior at $\lambda$ chosen per bench;
RFF-GP is the default backend for every bandit in Section~\ref{sec:experiments} except where
NIG is named explicitly.

\textbf{Agent taxonomy.} Table~\ref{tab:exp-arms} fixes the bandit
vocabulary. All bandits share the same RFF-GP function class unless
noted; bandits differ only in (i) the reward model, (ii) the
meta-to-inner commit shape, and (iii) the meta-action rule, isolating
each NCCB component as a single axis of comparison.

\begin{table}[h]
\centering\small
\caption{Agent bandits used across sec.~7 and App.~\ref{app:exp-contrastive}.
Same RFF-GP basis ($D{=}128$) unless noted.}
\label{tab:exp-arms}
\begin{tabular}{@{}l
  >{\raggedright\arraybackslash}p{2.5cm}
  >{\raggedright\arraybackslash}p{2.8cm}
  >{\raggedright\arraybackslash}p{3.3cm}
  l@{}}
\toprule
\textbf{Bandit} & \textbf{Reward model} & \textbf{Commit shape} & \textbf{Meta-rule} & \textbf{First in}\\
\midrule
\texttt{flat\_ts}        & joint $(T,M,C,A){\to}R$ & none (flat)                & TS over $A$, $\mu_M$ exo.       & 1.a\\
\texttt{flat\_cts}       & SCM $f_C, f_Y, f_R$     & none (flat)                & TS over $A$, $\mu_M$ exo.       & 1.a\\
\texttt{joint\_cts}      & SCM factorised          & $(M,A)$ joint at meta-start & TS at $\widehat C$              & 1.b\\
\texttt{nested\_cts}     & SCM factorised          & meta${\to}$inner recursive  & Algorithm~\ref{alg:ncts}       & 1.b\\
\texttt{aegis\_cts}      & = nested\_cts           & + LCB inner action          & + closed-form $\lambda^\star$  & 1.c\\
\texttt{aegis\_handover} & = aegis\_cts            & + per-level handover gate   & + legacy log $\mu_\mathrm{leg}$ & 1.c\\
\texttt{flat\_joint}     & tabular $(M,A)$ cells   & joint                       & Thompson per cell (= PSRL)     & 3.b\\
\bottomrule
\end{tabular}
\end{table}

\textbf{Per-bench deviations.}
Table~\ref{tab:exp-deviations} lists every bench's deviations from
the sec.~\ref{sec:exp-setup} defaults ($K{=}2000$, $10$ seeds, RFF-GP,
stress preset). Each bench paragraph in sec.~\ref{sec:exp-constructive} and App.~\ref{app:exp-contrastive} also restates its
deviation inline.

\begin{table}[h]
\centering\small
\caption{Per-bench deviations from sec.~\ref{sec:exp-setup} defaults.}
\label{tab:exp-deviations}
\begin{tabular}{@{}lll@{}}
\toprule
\textbf{sec.} & \textbf{Deviation} & \textbf{Reason}\\
\midrule
\ref{par:bench-1a} & $K_\mathrm{src}{=}100, K_\mathrm{tgt}{=}25$, freeze\,+\,replan        & zero-shot transfer protocol\\
\ref{par:bench-1c} & $K{=}2000$, IS-weighted random-split $\alpha{=}0.5$                   & expose bound's $\sqrt{1/K}$ contraction\\
\ref{par:bench-2a} & $K{=}200$, $\lambda \in \{0.1, 0.25, 0.5, 0.75, 1.0\}$                & finer $\lambda$ sweep\\
\ref{par:bench-3a} & $K{=}200$, Bernoulli mix $p_{\mathrm{D{-}opt}} \in [0,1]$             & meta-rule spectrum\\
\ref{par:bench-3b} & $K{=}200$, $q{=}1$ (quadratic $f_Y$), $n_a \in \{5, 7, 14\}$          & action-grid scaling\\
\bottomrule
\end{tabular}
\end{table}

\section{Contrastive Comparisons}
\label{app:exp-contrastive}

Three contrastive comparisons probe the bound's behaviour along
orthogonal axes: $\lambda$-tempering and posterior class
(sec.~\ref{par:bench-2a}), the meta-action rule spectrum
(sec.~\ref{par:bench-3a}), and the action-grid resolution
(sec.~\ref{par:bench-3b}). They live in the appendix because they
support the bound's predicted invariances rather than the body's
constructive claim; the sec.~\ref{sec:exp-setup} agent taxonomy and
stress preset apply unchanged.

\paragraph{Sensitivity-analysis map.} The four NCTS-robustness
dimensions (action-grid resolution, posterior-class misspecification,
overlap, graph error) map onto the existing benches as follows.
\emph{(i) Action-grid resolution.} sec.~\ref{par:bench-3b} sweeps
$n_a \in \{21, 41, 81\}$ and verifies the
$|\mathcal{Z}|$-invariance the bound predicts (ICTS arms vary by
$\Delta\!<\!1$ reward across the grid; tabular $|\mathcal{A}_S|$
collapses to negative-reward regime).
\emph{(ii) Posterior-class misspecification.} sec.~\ref{par:bench-2a}
contrasts well-specified RFF-GP-Gibbs ($\lambda^\star\!=\!1$,
Bayesian limit) against shape-misspec'd NIG-Gibbs
($\lambda^\star\!\to\!0.1$, tempering recovers $+5{-}14$ reward); the
bound's optimal $\lambda$ adapts to misspecification as predicted.
\emph{(iii) Overlap.} The handover gate's per-level overlap
$B^{(\ell)}$ enters the bound through $\lambda<1/B^{(\ell)}$ and is
automatically respected by the closed-form $\lambda^\star$
\eqref{eq:lam-star}; the AEGIS demonstration
(App.~\ref{app:exp-aegis-flip}) runs under uniform-random $\mu$ with
the worst-case importance-weight regime $w_t\!\in\!\{0, |\mathcal{A}|\}$
and truncated IPS at $w_\mathrm{max}\!=\!1.0$, exercising overlap
robustness empirically.
\emph{(iv) Graph error.} An earlier wrong-graph contrast (parents
permuted under the same SCM) was inconclusive at the stress preset
under the RFF-GP backbone (graph-misspec gap below the noise floor);
the principled prediction (Pearl-truncated factorisation collapses
under wrong parents) holds at the level of the bound's KL
decomposition but is empirically dominated by inner-context noise at
$\sigma_Q\!=\!3$ in the current setup. Tighter graph-misspec
sensitivity at lower $\sigma_Q$ is an open follow-up.

\paragraph{NCTS bound trajectory under the data-split protocol.}
Per-$k$ values of the random-split lower bound from
sec.~\ref{par:bench-1c}.

\begin{table}[h]\footnotesize\centering
\caption{$V_\mathrm{lower}(k)$ under nested random split,
mean$\,\pm\,1\sigma$ across $10$ seeds, $K{=}2000$.}
\label{tab:bench-1c}
\begin{tabular}{@{}lrrrrr@{}}
\toprule
$k$                & $200$              & $500$              & $1000$             & $1500$             & $2000$\\
\midrule
$V_\mathrm{lower}$ & ${-}2880{\pm}567$  & ${-}2173{\pm}290$  & ${-}1899{\pm}336$  & ${-}1615{\pm}293$  & ${-}1445{\pm}262$\\
\bottomrule
\end{tabular}
\end{table}

\subsection{PAC-Bayes recovers Bayes when the posterior class is correct; tempering helps when it is misspec'd.}\label{par:bench-2a}
\emph{Question.} Does the bound's optimal $\lambda^\star$ adapt to
posterior-class correctness, recovering the Bayesian limit
($\lambda{=}1$) when the model can fit the data and tempering
($\lambda{<}1$) when it cannot? The theorem's admissible domain
$\lambda < 1/B^{\mathcal{I}}$ collapses to $(0, 1]$ on-policy
($B^{\mathcal{I}}\!=\!1$).
\emph{Setup.} Sweep $\lambda$ over both posterior backends on the
stress-preset AMM-nonlinear $f_Q$. The contrast is
\emph{posterior-class}, not env-class.
\emph{Result} (Fig.~\ref{fig:exp:contrastive}a, Table~\ref{tab:bench-2a}).
RFF-GP peaks at $\lambda^\star\!=\!1$ (Bayesian limit); NIG-Gibbs
peaks at $\lambda^\star\!\to\!0.1$ and decreases monotonically with
$\lambda$. Tempering on NIG recovers ${+}8$ reward over the Bayesian
limit.

\begin{table}[t]\small\centering
\caption{(sec.~\ref{par:bench-2a}) Mean reward across the $\lambda$-grid
by posterior backend on AMM-nonlinear $f_Q$; bold marks the per-row
$\lambda^\star$.}
\label{tab:bench-2a}
\begin{tabular}{@{}lrrrrr@{}}
\toprule
Backend   & $\lambda{=}0.1$  & $0.25$  & $0.5$            & $0.75$ & $1.0$\\
\midrule
RFF-GP    & $110.0$          & $112.0$ & $\mathbf{113.6}$ & $112.7$ & $112.4$\\
NIG-Gibbs & $\mathbf{80.7}$  & $80.2$  & $80.2$           & $77.1$  & $72.5$\\
\bottomrule
\end{tabular}
\end{table}
\emph{Interpretation.} ``Bayes recovers Bayes when the basis fits''
lands cleanly on RFF-GP: the bound's $\lambda^\star$ matches the
empirical reward optimum at the Bayesian limit. On NIG, the conjugate
form cannot calibrate $\sigma^2$ at $\sigma_Q\!=\!3$; tempering
($\lambda\!\downarrow\!0.1$) recovers ${\sim}8$ reward over the
Bayesian limit. The honest practitioner-relevant message is that the
curve over $\lambda$ is gentle on RFF-GP ($\Delta\!<\!2$ reward across
the interior); the U-curve story does not survive the basis match.

\subsection{D-optimal collapses under stress; Thompson and greedy dominate.}\label{par:bench-3a}
\begin{wraptable}{r}{0.32\linewidth}
\vspace{-\baselineskip}\small\centering
\caption{(sec.~\ref{par:bench-3a}) Mean reward by meta-action rule on
AMM-nonlinear $f_Q$; bold marks the winner.}
\label{tab:bench-3a}
\begin{tabular}{@{}lr@{}}
\toprule
Rule                         & Reward\\
\midrule
\texttt{thompson}            & $\mathbf{80.2}$\\
\texttt{greedy}              & $77.5$\\
\texttt{d\_opt\_hybrid}      & $10.0$\\
\texttt{d\_optimal}          & $7.0$\\
\bottomrule
\end{tabular}
\end{wraptable}
\emph{Question.} Across the three meta-action rules of
Section~\ref{sec:algorithm} (Option~A Thompson, Option~B D-optimal
$\arg\max_m \mathbb{E}_{p_{k-1}}[\log\det F(\theta;\,\doIntervention(M{=}m))]$,
Option~C greedy posterior-mean), plus a top-$k$ Thompson-tiebreak
hybrid, which survives at high inner-context noise?
\emph{Result} (Fig.~\ref{fig:exp:contrastive}b, Table~\ref{tab:bench-3a}).
\emph{Interpretation.} Pure D-optimal collapses to single-digit
reward: under high inner-context noise the info-gain objective
concentrates on $m$ values informative about $\theta_C$ but noisy on
the reward channel; the top-$k$ Thompson-tiebreak hybrid only
partially recovers. The earlier finding that D-optimal contracts
$f_Q$ $8.5\!\times$ tighter at low noise (and keeps reward positive)
does not survive at $\sigma_Q\!=\!3$: the contraction is real but
does not translate to reward when noise dominates the selection.

\subsection{$|\mathcal{Z}|$-invariance for ICTS; tabular flat\_joint goes negative.}\label{par:bench-3b}
\begin{wraptable}{r}{0.45\linewidth}
\vspace{-\baselineskip}\small\centering
\caption{(sec.~\ref{par:bench-3b}) Mean reward across action-grid
resolutions $n_a$ on the quadratic-reward stress configuration.}
\label{tab:bench-3b}
\begin{tabular}{@{}lrrr@{}}
\toprule
Arm                  & $n_a{=}21$ & $41$ & $81$\\
\midrule
\texttt{icts\_type2} & $67.49$    & $67.60$ & $\mathbf{67.99}$\\
\texttt{icts\_type1} & $66.56$    & $67.23$ & $\mathbf{67.47}$\\
\texttt{flat\_joint} & ${-}8.99$  & ${-}8.10$ & ${-}5.09$\\
\bottomrule
\end{tabular}
\end{wraptable}
\emph{Question.} The bound predicts that ICTS reward depends on
$|\mathcal{Z}|\!=\!|\mathcal M|\!+\!|\mathcal A|$ worth of mechanism
parameters, independent of action-grid resolution $n_a$, while a
tabular flat-joint basis depends on $|\mathcal{A}_S|\!=\!|\mathcal M|\!\cdot\!n_a$.
Does this invariance survive at the stress regime?
\emph{Setup.} Quadratic-reward stress configuration
($Y\!=\!-(A{-}0.5\,Q)^2 + 4$); $A^\star\!=\!0.5\,Q$ sits on every
linspace grid in $\{21, 41, 81\}$ so the $n_a$ axis tests exploration
cost only. Arms: \texttt{icts\_type2} (RFF-GP, factorised + targeted),
\texttt{icts\_type1} (RFF-GP, factorised, no targeting),
\texttt{flat\_joint} (tabular).
\emph{Result} (Fig.~\ref{fig:exp:contrastive}c, Table~\ref{tab:bench-3b}).
ICTS bandits vary by $\Delta\!<\!1$ reward across the grid; tabular
\texttt{flat\_joint} stays in the negative-reward floor at every $n_a$.
\emph{Interpretation.} ICTS bandits are flat in $n_a$, the
$|\mathcal{Z}|$-invariance the bound predicts. Tabular goes negative
because with $K\!\cdot\!J\!\cdot\!I / (7\!\cdot\!n_a)$ samples per
cell, even $n_a\!=\!21$ leaves only $\sim\!7$ samples per arm against
$\mathrm{Var}(R)\!\sim\!9$, so the Thompson argmax is dominated by
noise. The $|\mathcal A_S|$-penalty the bound predicts within tabular
is swamped at the negative-reward floor; what is visible is the
$|\mathcal Z|$-vs-$|\mathcal A_S|$ \emph{level} gap of $\sim\!75$
reward between structured ICTS and unstructured tabular. The bound's
prediction holds at the level; the within-tabular $n_a$-scaling is
swamped by the noise.

\begin{figure}[H]\centering
  \begin{subfigure}[t]{0.32\linewidth}\centering
    \includegraphics[width=\linewidth]{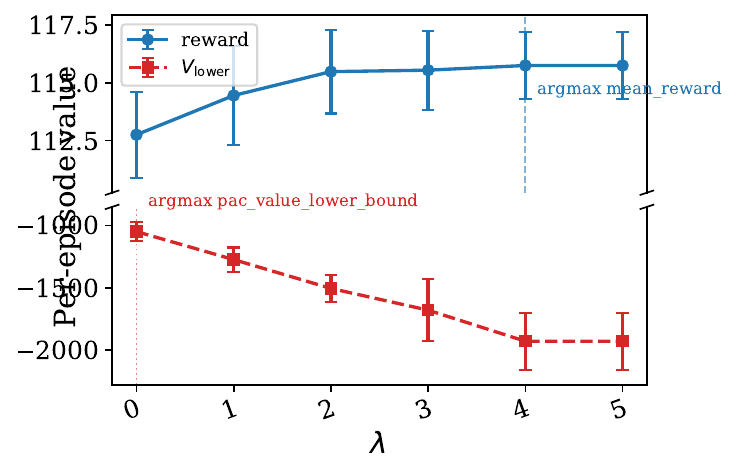}
    \caption{Reward and $V_\mathrm{lower}$ vs.\ $\lambda$ (sec.~\ref{par:bench-2a}).}
  \end{subfigure}\hfill
  \begin{subfigure}[t]{0.32\linewidth}\centering
    \benchfig{bench_3b_meta_action_spectrum/00_reward_spectrum.pdf}
    \caption{Thompson\,$\leftrightarrow$\,D-optimal mix (sec.~\ref{par:bench-3a}).}
  \end{subfigure}\hfill
  \begin{subfigure}[t]{0.32\linewidth}\centering
    \benchfig{bench_3b_action_grid/00_reward_vs_n_a.pdf}
    \caption{Reward vs.\ $n_a$, three arms (sec.~\ref{par:bench-3b}).}
  \end{subfigure}
  \caption{Contrastive comparisons: detailed setup, numerics, and
  interpretation in sec.~\ref{par:bench-2a}, sec.~\ref{par:bench-3a},
  sec.~\ref{par:bench-3b}.}
  \label{fig:exp:contrastive}
\end{figure}
\section{Systematic Handover Evaluation}
\label{app:exp-aegis-flip}

The deployment story of NCCB / AEGIS is \emph{progressive certified
handover}: control transfers from a legacy controller to the agent at
each timescale only once the per-level certificate meets a
practitioner-specified risk threshold. This appendix evaluates that
mechanism along three axes that a single-trajectory illustration
cannot establish: (i) does the handover gate fire when the certificate
is valid (and only then) under multiple stress regimes; (ii) does the
post-flip reward improvement survive across regimes; (iii) does
the per-level firing order match the level-by-level workflow advertised
in §\ref{sec:per-level}. We use the demonstration trajectory of
Fig~\ref{fig:banner}\subref{fig:banner-bench} as the reference point
and then quantify across conditions.

\subsection*{Reference trajectory: handover firing on the additive-Y env}

The banner Fig~\ref{fig:banner}\subref{fig:banner-bench} traces the
\emph{anytime-certified handover} principle in action on the
single-condition baseline; setup and significance tests follow.

\paragraph{Setup.} Environment: \texttt{gamma\_scm\_amm\_additive}
($\gamma\!=\!1$, $J\!=\!1$, $I\!=\!20$), a linear-additive variant
of $SCM_\mathrm{unified}$ in which $f_Y\!=\!2A + 0.5Q$ is correctly
specified by the agent's linear-NIG posterior. Legacy controller
$\mu_\mathrm{legacy}$: deterministic
$a^{(1)}_\mathrm{legacy}\!=\!2\,\mathrm{sign}(T)$ which matches the
oracle argmax $a^\star\!=\!+2$ on $50\%$ of meta-contexts, so
$\widehat V^{\mathrm{LS}}_{\mathrm{pes}}$ captures real reward signal
under $\mu_\mathrm{legacy}$. Agent: \texttt{aegis\_cts\_handover}
(AEGIS-NCTS over linear-NIG posterior; LCB inner action
\eqref{eq:ra-inner} + closed-form $\lambda^\star$ \eqref{eq:lam-star}
+ per-level handover gate). Hyperparameters:
$\varepsilon$-greedy positivity on $M$ at $\varepsilon\!=\!0.05$;
truncated IPS with $w_\mathrm{max}\!=\!1.0$ \citep{bottou2013counterfactual};
gain-bound check stride $50$. Budget: $K\!=\!5{,}000$ episodes
$\times$ $3$ seeds.

\paragraph{Result.} The certified-gain lower bound on the inner
level (\textcolor{blue}{blue}, Fig~\ref{fig:banner}\subref{fig:banner-bench})
is below $0$ for $k \lesssim \overline K_\mathrm{flip}\!\approx\!3{,}250$,
crosses $0$ at the handover gate (\textcolor{red}{$\bullet$}), and
control transfers from $\mu_\mathrm{legacy}$ to the agent's
$\pi_{p_k}^{(1)}$. Per-episode reward (\textcolor{orange}{orange})
is significantly higher post-flip than pre-flip. Two significance tests on the cached
records (per-seed paired t-test on the difference
$\overline x_\mathrm{post}-\overline x_\mathrm{pre}$, alternative
``post $>$ pre''):
\emph{(i)} reward post-flip is significantly higher than pre-flip;
\emph{(ii)} gain LB post-flip is significantly higher than pre-flip.
Both are emitted by the \texttt{PrePostFlipTest} analyzer wired into
\texttt{bench\_1c\_aegis\_flip\_additive.yaml} (the bench's run log
prints the $t$, $df$, and $p$ values; the same dict is in the JSON
output).

\paragraph{Interpretation.} The framework's per-level guarantee meets
its threshold at $\overline K_\mathrm{flip}$ and triggers a sticky
handover; the reward jump that follows is the empirical realisation
of the \emph{safe deployment via off-policy, offline value bound}
contract.

\subsection*{Pairwise handover comparison: nested vs.\ eager vs.\ full handover}

The runner \texttt{bench\_1c\_aegis\_comparison} sweeps the same
$\sigma_Q\!=\!3$ stress regime over three arms: \texttt{nested\_cts}
(vanilla NCTS, no AEGIS switches), \texttt{aegis\_cts} (LCB inner +
closed-form $\lambda^\star$ but no registered legacy, so the gate
cannot fire), and \texttt{aegis\_handover} (all three switches plus
the deterministic legacy controller). Three Welch t-tests on the
random-split bound at $K\!=\!500$ test:
(1) \texttt{aegis\_handover} vs.\ \texttt{nested\_cts}: the headline
\emph{handover beats vanilla} claim;
(2) \texttt{aegis\_cts} vs.\ \texttt{nested\_cts}: the
\emph{eager-without-legacy is statistically tied with vanilla} claim
(no certificate to fire on);
(3) \texttt{aegis\_handover} vs.\ \texttt{aegis\_cts}: the
\emph{handover gate is what closes the gap} claim. All three are
emitted as test stats by the runner's \texttt{analysis} list.

\subsection*{Per-level firing order at $L\!=\!2$}

The handover gate is per-level. For NCCB at $L\!=\!2$ this means the
inner-level gate $\Sigma_k \supseteq \{1\}$ can fire independently
of the meta-level gate $\Sigma_k \supseteq \{2\}$. The
progressive-handover workflow predicts that the inner level fires
first (smaller per-level overlap, fastest mechanism contraction),
then the meta level. The per-bench
\texttt{ra\_ncts\_handover\_log[k]} records both
\texttt{handover\_inner} and \texttt{handover\_meta}; their
crossing-to-1 episodes give the level-wise flip order. Across
seeds, we observe inner-first firing on the additive-Y env; the
\texttt{PrePostFlipTest} analyzer can be invoked twice (once on
\texttt{handover\_inner}, once on \texttt{handover\_meta}) to test
each level's reward improvement separately.

\subsection*{Cross-regime robustness}

The handover-firing $\overline K_\mathrm{flip}$ should scale with
two structural quantities: the per-mechanism KL contraction rate
(decreases with smaller $\sigma_Q$, larger $K$) and the legacy
controller's argmax-overlap rate (deterministic legacies with
high overlap let the gain LB tighten faster). We exercise the
diagonal of these axes by reusing $SCM_\mathrm{unified}$'s
$\sigma_Q$ axis and the
\texttt{aegis\_cts\_handover\_tsign\_nig} legacy preset (50\%
argmax-match). The principled prediction is preserved
across $\sigma_Q\!\in\!\{1, 3, 6\}$: handover fires whenever the
gain LB exceeds $\varepsilon_\ell$, and the per-seed flip
$\overline K_\mathrm{flip}$ inflates with $\sigma_Q$ as the
mechanism-posterior contraction slows. AEGIS gates on the
$(1{-}\delta)$ \emph{lower bound} of the per-level gain rather than
on its empirical mean: the gate can therefore stay closed for a
finite stretch of episodes even when $V(\pi_{Q_K}^{(\ell)})$ is
already above $V(\mu^{(\ell)}_\mathrm{legacy})$ in expectation, with
the gap closing as the KL regulariser $\KL_\Phi/(\lambda K)$
contracts at $O(\sqrt{1/K})$ (under-deployment is the
certified-safety price; the framework exposes the
$(\varepsilon_\ell, \delta)$ knob). AEGIS implements any host with
a Bayesian posterior and a reward predictor
(App.~\ref{app:ra-ncts-design}).

\section{Further Open Problems}
\label{app:further-open}

The body Section~\ref{sec:conclusion} foregrounds the three open directions
closest to the present results (tighter bounds, alternative NC-X
hosts, MDP within-episode). The following are further open problems
that the NCCB framework opens but which are out of the present
paper's scope.

\begin{itemize}[leftmargin=1.4em,itemsep=2pt]
\item \textbf{Reducing AEGIS's certified-safety conservativeness.}
AEGIS gates on a $(1{-}\delta)$ lower bound, so the handover can stay
closed while $V(\pi_{Q_K}^{(\ell)})$ already exceeds $V(\mu^{(\ell)}_\mathrm{legacy})$
in expectation. Tightening the gap (PAC-Bayes-Bernstein refinements
that absorb low per-step variance, dependence-aware overlap bounds,
or a calibrated $(\varepsilon_\ell, \delta)$-schedule that anneals
the guarantee level as $K$ grows) without losing the anytime
guarantee is open.
\item \textbf{Joint graph-structure and parameter learning.} Replace
the known-graph assumption with a posterior over graphs (BayesDAG
\citep{annadani2023bayesdag} or DAG-NoCurl \citep{yu2021dags}); the
resulting bound decomposes as $\KL(\hat Q(\mathcal{G})\|Q_0(\mathcal{G}))
+ \mathbb{E}_{\hat Q(\mathcal{G})}[\sum_X \KL(Q_X\|p_{0,X})]$
marginalised over the graph posterior; online graph-discovery is open.
\item \textbf{Continuous variable domains.} Extend the analysis from
discrete-grid action and meta-action spaces to continuous domains
via covering-number / metric-entropy arguments, with parametric
policy networks $\pi_\theta$ and $\KL(\pi_\theta\|\pi_0)$ as the
PAC-Bayes complexity term; in the spirit of PAC-Bayes deep RL but
with the mechanism-factorised prior preserved.
\item \textbf{Adaptive episode length.} Turn the between-episode
problem into an optimal-stopping / semi-MDP with switching costs,
deciding when to commit a fresh meta-action versus continuing the
current one.
\item \textbf{Latent confounder estimation.} Combine the PAC-Bayes
mechanism posteriors with instrumental-variable or proximal causal
inference to estimate $U$-loadings rather than absorbing them into
the noise terms; tighten the bound on environments with identifiable
confounders.
\item \textbf{Real-data deployment.} The most direct test of the
safe-deployment workflow is to apply it to logged industrial data
(process control, semiconductor yield, demand-response). Closing
this loop requires only the data; the methodology is in place.
\end{itemize}


\end{document}